\newtheorem*{geq_conjecture}{Gaussian Feature Assumption}
\newtheorem{proposition}{Proposition}[section]
\theoremstyle{remark}%
\newtheorem{remark}{Remark}
\title{Loss Dynamics of Temporal Difference Reinforcement Learning}
\author{%
  Blake Bordelon, 
    \  Paul Masset, \ \ Henry Kuo
    \  \& \  
    Cengiz Pehlevan \\
  John Paulson School of Engineering and Applied Sciences, \\ Center for Brain Science, \\ Kempner Institute for the Study of Natural \& Artificial Intelligence,
  \\
  Harvard University
  \\
  Cambridge MA, 02138 \\
  \texttt{blake\_bordelon@g.harvard.edu}, \texttt{cpehlevan@g.harvard.edu}  \\
}
\def\M{\bm M}
\def\A{\bm A}
\def\I{\bm I}
\def\w{\bm w}
\def\u{\bm u}
\begin{document}

\maketitle

\begin{abstract}
Reinforcement learning has been successful across several applications in which agents have to learn to act in environments with sparse feedback. However, despite this empirical success there is still a lack of theoretical understanding of how the parameters of reinforcement learning models and the features used to represent states interact to control the dynamics of learning. In this work, we use concepts from statistical physics, to study the typical case learning curves for temporal difference learning of a value function with linear function approximators. Our theory is derived under a Gaussian equivalence hypothesis where averages over the random trajectories are replaced with temporally correlated Gaussian feature averages and we validate our assumptions on small scale Markov Decision Processes. We find that the stochastic semi-gradient noise due to subsampling the space of possible episodes leads to significant plateaus in the value error, unlike in traditional gradient descent dynamics. We study how learning dynamics and plateaus depend on feature structure, learning rate, discount factor, and reward function. We then analyze how strategies like learning rate annealing and reward shaping can favorably alter learning dynamics and plateaus. To conclude, our work introduces new tools to open a new direction towards developing a theory of learning dynamics in reinforcement learning. 
\end{abstract}


\section{Introduction}
Reinforcement learning (RL) is a general paradigm which allows agents to learn from experience the relative value of states in their environment and  to take actions that maximize long term rewards \cite{sutton2018reinforcement}. RL algorithms have been successfully applied in a number of real world scenarios such as strategic games like backgammon and Go, autonomous vehicles, and fine tuning language models \cite{tesauro1995temporal,mnih2015human,silver2016mastering,wurman2022outracing,kiran2021deep,ziegler2019fine}. 

Despite these empirical successes, a theoretical understanding of the learning dynamics  and inductive biases of RL algorithms is currently lacking \cite{hessel2019inductive}.  A large fraction of the theoretical work has focused on proving convergence and deriving bounds both in the asymptotic \cite{dayan1992convergence,watkins1992q,tsitsiklis1997analysis,gordon2000reinforcement,kearns2002near,auer2008near} and non-asymptotic \cite{bhandari_TD_linear,dalal2018finite,lakshminarayanan2017linear} limits, but do not provide a full picture of the evolution of the learning dynamics.

A desired feature of a candidate theory is to characterize the influence of function approximation to RL dynamics and its performance. Early versions of RL operated in a tabular setting, similar to dynamic programming \cite{bellman2010dynamic}, where all the states in the environment could be mapped one-to-one to a specific value and policy. In large and complex environments, it is not possible to enumerate all the states in the environment necessitating the use of function approximation for the target value and policy functions. Indeed, the recent success of many RL algorithms relies on deep reinforcement learning architectures that combine an RL architecture with deep neural networks to build effective value estimators and policy networks \cite{arulkumaran2017deep}.

One difficulty in analysing these algorithms compared to supervised learning settings is that the distribution of the data received at each time-step is not stationary. This non-stationarity arises from two principal sources: First, whether in an episodic or continuous setting, states visited within a learning trajectory are dependent on the recent past. Trajectories might be randomly sampled but points within a trajectory are correlated. Second, when the policy is updated it also changes the distribution of future visited states. \looseness=-1

Here, we will focus on the first form of non-stationarity when learning a value function in the context of \textit{policy evaluation} \cite{sutton2018reinforcement} using a classical RL algorithm, temporal difference (TD) learning \cite{sutton1984temporal}. We develop a theory of learning dynamics for RL in this setting in a high dimensional asymptotic limit with a focus on understanding the role of linear function approximation from a set of nonlinear and static features. In particular, we leverage ideas from recent work in application of statistical physics to machine learning theory to perform an average over the possible sequences of features encountered during learning. Our contributions are as follows: \looseness=-1
\begin{itemize}[leftmargin=*]
    \item 
We introduce concepts from statistical physics, including a path integral approach to describe dynamics \cite{martin1973statistical,crisanti2018path,helias2020statistical,bordelon2022self,bordelon2022influence} and the Gaussian equivalence assumption \cite{bordelon2020spectrum, loureiro2021learning, hu2022universality, bordelon2022sgd}, to derive a theory of learning dynamics in TD learning (\S\ref{sec:theory_TD}) in an online setting. We provide an analytical formula for the typical case learning curve for TD learning.
    \item 
We show that our theory predicts scaling of the learning convergence speed and performance plateaus with parameters of the problem including task-feature alignment \cite{canatar}, learning rate, discount factor or batch size (\S\ref{sec:spectral} and \S\ref{sec:plateaus_annealing}). Task-feature alignment is a metric that quantifies how features allow fast or slow learning for a given task.
   \item We show our theory can be used to understand and guide design principles when choosing meta-parameters. Specifically, we show that we can use our theory to infer optimal schedules of learning rate annealing and the effects of reward shaping (\S\ref{sec:plateaus_annealing} and \S\ref{sec:reward_shaping}).
\end{itemize}




    

\section{Problem Setup and Related Works}\label{sec:relatedworks}

\subsection{Problem Setup}
We consider a set of states denoted by $s$, possibly continuous, and a fixed policy $\pi$ which generates a distribution over actions given the state. The state dynamics are defined by a distribution $p(\tau)$ over trajectories through state space $\tau = \{ s_1 , s_2 ,... , s_T\}$. Note that state transitions do not have to be Markovian, but each trajectory is i.i.d. sampled from $p(\tau)$. We consider trajectories of length $T$. Each state is represented by an $N$-dimensional feature vector $\bm \psi(s)\in \mathbb{R}^N$, so that trajectory generates a collection of feature vectors $\{\bm \psi(s_t) \}_{t=1}^T$. The rewards are generated by a reward function $R(s)$ which depends on the state. (In general, the features and rewards can depend on action as well: transition dynamics are still fixed as the policy is fixed, but variance over rewards at a given state may need to be modeled, see Appendix \ref{app:action_dep_rewards}).  

At any time, we are interested in characterizing the \textit{value function} associated with a state, which measures the expected discounted sum of future rewards when starting in state $s_0$
\begin{align}
    V(s_0) = R(s_0) + \sum_{t\geq 1} \mathbb{E}_{s_t | s_0} \gamma^t R(s_t) =  R(s_0) + \gamma \mathbb{E}_{s_1 | s_0} V(s_1) .
\end{align}
We use linear function approximation to learn the value function $\hat{V}(s) = \bm\psi(s) \cdot \w$. Similar to kernel learning \cite{scholkopf2002learning}, the features $\bm\psi$ should be high dimensional so that they can express a large set of possible value functions. 

We study TD learning dynamics given this setup. At each step of the TD iteration, we sample a batch of $B$ independent trajectories from the distribution and compute the TD update
\begin{align}
    \w_{n+1} &= \w_n + \frac{\eta_n}{TB} \sum_{\mu=1}^B \sum_{t=1}^T \Delta_n^\mu(t) \bm\psi(s_n^\mu(t)) , \nonumber \\
    \Delta_n^\mu(t) &\equiv R(s_n^\mu(t)) + \gamma \hat{V}(s_n^\mu(t+1)) - \hat{V}(s_n^\mu(t)) .
\end{align}
We operate in a online batch regime as the trajectories in each batch are resampled at each iteration. This is distinct from an offline setting where the batches would be resampled from a finite-sized buffer \cite{sutton2018reinforcement}. Convergence considerations for infinite-batch online TD learning width different types of features $\bm\psi$ are outlined in Appendix \ref{app:convergence_considerations}. The specific form for the TD-error $\Delta_n^\mu(t)$ depends on the precise variant of TD learning that is used. Here, we will focus on TD(0) but our approach can be extended to other TD learning rules and definitions of the return function. We see that the iterates $\w_n$ will form a stochastic process as each sequence of states in an episode $\{s^\mu_n(t)\}$ are drawn randomly from $p(\tau)$.  In general, we allow the learning rate $\eta_n$ to depend on iteration, an important point we will revisit later. The distribution of features $\{ \bm\psi(s_n^\mu(t)) \}$ over random trajectories $\tau$ is in general quite complicated, depending on the details of the state transitions and the nonlinear feature maps, which motivates the following question:

\textbf{Question: \textit{How can the stochastic dynamics of temporal difference learning be characterized for complicated trajectory distributions $p(\tau)$ and feature maps $\bm\psi(s)$?}}

To address this question, in this work, we provide an analysis of TD learning that explicitly models the statistics of stochastic semi-gradient updates to $\w_n$. Our framework is based on a Gaussian equivalence ansatz for TD learning and high dimensional mean field theory which predicts the statistics of TD errors $\Delta_n^\mu(t)$ and the weight iterates $\w_n$. The theory reveals a rich set of phenomena including plateaus unique to SGD noise in TD learning which can be ameliorated with learning rate annealing.\looseness=-1

\vspace{-10pt}
\subsection{Related Works}
\vspace{-5pt}


The dynamics of TD learning have been notoriously difficult to analyse. Unlike supervised learning settings, sampled states are correlated across a trajectory and the algorithms involve bootstrapping: using estimates of the value function for future states in the temporal difference update \cite{sutton2018reinforcement}. Some prior works study the least-square TD learning rule, which solves, at each step $n$ of the algorithm, a linear system for the instantaneous best fit to $n$ samples \cite{tagorti2015rate, pan2017accelerated, geramifard2006ilstd}. Alternatively, many works focus on the online SGD version of TD learning, where incremental updates are made to the parameters at each step, using fresh samples. This is the setting of our work. The focus of this literature has initially been to prove convergence and bounds on asymptotic behavior \cite{tsitsiklis1997analysis,pineda1997mean,gordon2000reinforcement,kearns2002near,auer2008near}. More recently, progress has been made in deriving bounds in the non-asymptotic regime. Initial work assumed that data samples were \textit{i.i.d.} \cite{bhandari_TD_linear,dalal2018finite,lakshminarayanan2017linear,patil2023finite} and recent work has extended those approaches to Markovian noise \cite{bhandari_TD_linear,srikant2019finite,prashanth2021concentration,NEURIPS2022_32246544}. The majority of these proofs use the ODE-like method for stochastic approximation \cite{tsitsiklis1997analysis,borkar2000ode}, which corresponds to a limit of the stochastic semi-gradient dynamics where the effects of mini-batch noise are neglected. This is also known as the ``mean-path'' dynamics of TD learning and will correspond to the infinite batch limit of our theory. Furthermore, many of these methods require the use of iterative averaging of the learned value function, whereas we study the final iterate convergence. The approach we take here differs from many of these results as our goal is not to provide bounds on worst-case behavior but instead to provide a full description of the dynamics of the typical case scenario during learning.\looseness=-1 

Our approach also highlights the importance of the structure of the representations in controlling the dynamics of learning. This had been long been recognized in reinforcement learning and previous works proposed to improve feature representations to improve algorithmic performance \cite{menache2005basis,mahadevan2007proto,bellemare2019geometric}. This line of work has shown the importance of the relative smoothness of the representations and target functions in the ODE limit of TD dynamics \cite{bellemare2019geometric,lyle_RL_dynamics}. Similarly, several methods have been proposed to empirically learn a better shaping function \cite{hu2020learning,zou2019reward}. In \textit{policy learning} it has also been recognized that using a gradient aligned to the statistics of the tasks, such as the natural gradient \cite{amari1998natural} can greatly speed up convergence \cite{kakade2001natural}. Our work does not explore such feature learning per se but could be used as a diagnostic tool to analyse how representations impact learning speed. 

We adopt the perspective of statistical physics, by working with a simplified feature distribution which captures the learning dynamics and solving the theory in a high-dimensional limit \cite{seung1992statistical,  engel2001statistical, bahri2020statistical}. We derive TD reinforcement learning curves from a mean field theory formalism which is exact for infinite dimensional features and batch size. Similar calculations for supervised learning on Gaussian data have been shown to provide an accurate description of high dimensional dynamics \cite{mignacco2020dynamical, gerbelot2022rigorous, celentano2021high}. Further, even when data is not actually Gaussian, several algorithms, such as kernel or random-features regression, exhibit universality in their loss behavior, enabling analysis of the learning curve with a simpler Gaussian proxy \cite{bordelon2020spectrum, canatar, simon2022eigenlearning, loureiro2021learning, hu2022universality}. We exploit this idea in the TD learning setting to some success. We note that Gaussian equivalence or universality is not a panacea, and in many cases the Gaussian proxy can fail to capture important machine learning phenomena \cite{loureiro2021learning, refinetti2022neural, ingrosso2022data}. 


\section{Theoretical Results for Online TD Learning}\label{sec:theory_TD}


\subsection{Computation of Learning Curves}\label{sec:compute_learn_curves}

We develop a dynamical mean field theory (DMFT) formalism can be utilized to compute the learning curves. We provide the full derivation of the DMFT in Appendix \ref{app:derivation_learn_curves}. This computation consists of tracking the moment generating function for the iterates $\w_n$ over the trajectories of randomly sampled features $\{ \bm\psi_\mu^n(t) \}_{t=1}^T$. In an appropriate high dimensional asymptotic limit, the results of our theory can be summarized as the following proposition.


\begin{proposition}\label{prop:full_result}
Let $N,B\to\infty$ with $B/N = \mathcal{O}(1)$ and episode length $T = \mathcal{O}(1)$. Let the ground truth reward function be $R(s) = \w_R \cdot \bm\psi(s)$ and value function $V(s) = \w_{TD} \cdot \bm\psi(s)$ in the basis of our features. Define matrices 
\begin{align} 
\bm{\bar{\Sigma}} \equiv \frac{1}{T} \sum_t \bm\Sigma(t,t), \quad \bm{\bar{\Sigma}}_+ \equiv \frac{1}{T} \sum_t \bm\Sigma(t,t+1), \quad \A \equiv \bm{\bar \Sigma} - \gamma \bar{\bm\Sigma}_+,
\end{align}
and assume that the features are such that matrix $\A$ is of extensive rank in $N$. Then the typical value estimation error $\mathcal L_n = \left< \left( V(s) - \hat{V}_n(s) \right)^2 \right>_s$ after $n$ steps has the form
\begin{align}
    \mathcal{L}_n & = \frac{1}{N} \text{Tr} \bm{\bar\Sigma} \M_n,  \label{eq:lc1} \\ 
    \M_{n+1} &= (\I - \eta \A) \M_n (\I - \eta \A)^\top + \frac{\eta^2}{\alpha^2 T^2} \sum_{tt'} Q_n(t,t') \bm \Sigma(t,t')  
    \label{eq:lc2} \\
    Q_n(t,t') &= \frac{1}{N} \left< (\w_R-\w_n)^\top \bm\Sigma(t,t') (\w_R-\w_n) \right> + \frac{\gamma }{N} \left<(\w_R-\w_n)^\top \bm\Sigma(t,t'+1)\w_n  \right> \nonumber
    \\
    &+ \frac{\gamma}{N} \left<\w_n^\top \bm\Sigma(t+1,t') (\w_R-\w_n) \right> + \frac{\gamma^2}{N} \left<\w_n^\top \bm\Sigma(t+1,t'+1) \w_n  \right> \label{eq:lc3},
\end{align}
where $\alpha = B/N$ and $Q_n(t,t') = \left< \Delta_n(t) \Delta_n(t') \right>$ is the correlation of randomly sampled TD-errors at episodic times $t,t'$ and iteration $n$. The average over weights $\left< \right>$ denotes a Gaussian average whose moments are related to $\M_n$.   The correlation function $Q_n(t,t')$ depends on $\M_n$ and the average weights $\left< \w_n \right>$; we provide its full formula in Appendix \ref{app:final_result}, equation \eqref{eq:Q_fn_M}. 
\end{proposition}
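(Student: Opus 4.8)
The plan is to track the first two moments of the weight iterates directly, treating the freshly resampled batch at each step as the only randomness entering $\w_{n+1}$ given $\w_n$, and to close the resulting recursions using Gaussian (Wick) factorization of the feature moments in the $N,B\to\infty$ limit. First I would introduce the error vector $\bm\epsilon_n \equiv \w_n - \w_{TD}$ relative to the mean-path TD fixed point. Using $\Delta_n^\mu(t) = (\w_R-\w_n)^\top\bm\psi(s_n^\mu(t)) + \gamma\,\w_n^\top\bm\psi(s_n^\mu(t+1))$, the update reads $\bm\epsilon_{n+1} = \bm\epsilon_n + \eta_n\,\bm g_n$ with stochastic semi-gradient $\bm g_n = \frac{1}{TB}\sum_{\mu,t}\Delta_n^\mu(t)\bm\psi(s_n^\mu(t))$. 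Averaging $\bm g_n$ over the fresh batch at fixed $\w_n$ (using $\mathbb E[\bm\psi_t\bm\psi_{t'}^\top] = \bm\Sigma(t,t')$) gives the mean-path drift $\langle\bm g_n\rangle = \bar{\bm\Sigma}\w_R - \A\w_n = -\A\bm\epsilon_n$, where the identification $\A\w_{TD} = \bar{\bm\Sigma}\w_R$ pins down the fixed point and requires $\A$ to be invertible --- exactly the extensive-rank hypothesis. Writing $\bm g_n = -\A\bm\epsilon_n + \delta\bm g_n$ with $\langle\delta\bm g_n\rangle = 0$ isolates the minibatch noise.

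Next I would assemble the second-moment recursion for $\M_n \propto \langle\bm\epsilon_n\bm\epsilon_n^\top\rangle$. Expanding $\bm\epsilon_{n+1} = (\I-\eta_n\A)\bm\epsilon_n + \eta_n\,\delta\bm g_n$ produces three pieces. The drift piece gives $(\I-\eta_n\A)\M_n(\I-\eta_n\A)^\top$, matching \eqref{eq:lc2}. The cross terms vanish because the batch drawn at step $n$ is independent of $\bm\epsilon_n$ (which depends only on batches $<n$), so $\langle\delta\bm g_n\mid\bm\epsilon_n\rangle = 0$. The noise piece is $\eta_n^2\langle\delta\bm g_n\delta\bm g_n^\top\rangle$; since the $B$ trajectories are i.i.d., $\langle\delta\bm g_n\delta\bm g_n^\top\rangle = \frac{1}{B}\,\mathrm{Cov}(\bm g_n^\mu)$ with $\bm g_n^\mu = \frac{1}{T}\sum_t\Delta_n^\mu(t)\bm\psi_t^\mu$, supplying the $1/B = 1/(\alpha N)$ scaling responsible for the explicit $\alpha$-dependence in \eqref{eq:lc2}.

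The third step is to evaluate the single-trajectory covariance $\mathrm{Cov}(\bm g_n^\mu) = \frac{1}{T^2}\sum_{tt'}\big[\langle\Delta_n(t)\Delta_n(t')\bm\psi_t\bm\psi_{t'}^\top\rangle - \langle\Delta_n(t)\bm\psi_t\rangle\langle\Delta_n(t')\bm\psi_{t'}\rangle^\top\big]$. Because each $\Delta_n(t)$ is linear in the (Gaussian-equivalent) features, the integrand is a fourth feature moment, and Isserlis/Wick gives three pairings: a ``disconnected'' term $\langle\Delta_n(t)\Delta_n(t')\rangle\,\bm\Sigma(t,t')$ and two rank-one ``connected'' terms, one of which cancels exactly against the mean subtraction. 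In the high-dimensional limit the surviving connected term is rank one and hence subleading by a factor $1/N$ relative to the full-rank disconnected term, so $\mathrm{Cov}(\bm g_n^\mu)\to\frac{1}{T^2}\sum_{tt'}\langle\Delta_n(t)\Delta_n(t')\rangle\,\bm\Sigma(t,t')$. Identifying $Q_n(t,t')$ with the normalized scalar TD-error correlation $\langle\Delta_n(t)\Delta_n(t')\rangle$ yields the noise term of \eqref{eq:lc2}, and expanding this correlation with $\Delta_n(t) = (\w_R-\w_n)^\top\bm\psi_t + \gamma\,\w_n^\top\bm\psi_{t+1}$ into its four quadratic forms reproduces \eqref{eq:lc3}. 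The system closes because $Q_n$ is quadratic in $\w_n$, so it is fixed by the first two moments $\langle\w_n\rangle$ and $\M_n$ once the law of $\w_n$ is taken Gaussian; the loss \eqref{eq:lc1} then follows from $\mathcal L_n = \langle\bm\epsilon_n^\top\bar{\bm\Sigma}\bm\epsilon_n\rangle = \mathrm{Tr}\,\bar{\bm\Sigma}\langle\bm\epsilon_n\bm\epsilon_n^\top\rangle$.

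The hard part will be justifying the two closure assumptions rather than the algebra above. First, the Gaussian equivalence itself --- replacing the true trajectory-feature law by temporally correlated Gaussians with covariance $\bm\Sigma(t,t')$ --- is an ansatz, exact only for special feature models and otherwise a universality claim to be validated empirically. Second, even granting Gaussian features, one must show that the order parameters $Q_n$ and $\M_n$ self-average and that the Gaussian approximation for the law of $\w_n$ is asymptotically exact; the clean route is to write the moment generating functional of the trajectory $\{\w_n\}$, introduce $\M_n$, $Q_n$ and their conjugates as order parameters, and take a saddle point as $N,B\to\infty$, at which point the subleading connected Wick terms and the non-Gaussian corrections to $\w_n$ both vanish. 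Controlling these $1/N$ corrections uniformly over the $\mathcal O(1)$ horizon $T$ and the iteration index $n$ is the real technical obstacle.
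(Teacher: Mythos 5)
Your derivation is correct, but it is not the route the paper takes for its primary proof: the paper establishes Proposition \ref{prop:full_result} via a path-integral / dynamical mean-field theory computation (Appendix \ref{app:derivation_learn_curves}), writing the moment generating function of the iterates over random feature draws, introducing order parameters $Q_n, C_n, C_n^R$ together with response functions $D_n, D_n^R$ and their conjugates, and evaluating the $N,B\to\infty$ limit by saddle point. What you propose is, nearly line for line, the paper's \emph{alternative} derivation by direct iterate averaging (Appendix \ref{app:direct_iterate_averaging}, advertised in Remark 1): expand the second-moment recursion using independence of the fresh batch, reduce the noise term to a single-trajectory covariance carrying the $1/B$ factor, Wick-factorize the fourth moment $\left< \Delta_n(t)\Delta_n(t')\bm\psi(t)\bm\psi(t')^\top \right>$ into the disconnected piece $Q_n(t,t')\bm\Sigma(t,t')$ plus two outer-product pairings, one of which cancels exactly (in the paper, against the $-\frac{\eta^2}{B}\A\M_n\A^\top$ term left over from completing the square; in your bookkeeping, against the mean subtraction in the covariance --- the same algebra), leaving the cross-time term $\frac{\eta^2}{T^2 B}\sum_{tt'}\left<\Delta_n(t')\bm\psi(t)\right>\left<\Delta_n(t)\bm\psi(t')^\top\right>$ to be discarded. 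The trade-off between the two routes is this: your route is elementary and makes completely transparent which term mean-field theory drops, but where the paper's appendix simply declares dropping it an approximation validated numerically, you take on the burden of justifying it, and your stated reason (``rank one and hence subleading by $1/N$'') is incomplete as written --- a rank-one matrix can have extensive trace, so you additionally need that under the normalization in which $Q_n = \mathcal{O}(1)$ the vectors $\left<\Delta_n(t)\bm\psi(t')\right>$ have $\mathcal{O}(1)$ norm, so that the dropped term contributes $\mathcal{O}(1/N)$ per step to $\frac{1}{N}\mathrm{Tr}\,\bar{\bm\Sigma}\M_n$ against the disconnected term's $\mathcal{O}(1)$, together with uniform-in-$n$ control so the corrections do not accumulate (which you correctly flag as the real obstacle). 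The DMFT route buys exactly what your closure assumptions presuppose: the Gaussianity of $\w_n$ and of the TD errors, and the vanishing of the dropped cross-correlation (encoded there as the saddle-point identities $D_n = D_n^R = 0$), emerge as outputs of the saddle point rather than inputs, at the cost of the heavier field-theoretic machinery.
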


\begin{proof} The full derivation is in Appendix \ref{app:derivation_learn_curves}. At a high level, we track the moment generating function of the iterates $\w_n$ over random draws of features $\{\bm\psi_n^\mu(t) \}$, $Z[\{ \bm j_n \}] = \mathbb{E}_{\{\bm\psi_n^\mu(t) \}} \exp\left( i \sum_{n} \bm j_n \cdot \w_n \right) \propto \int \mathcal{D} q \exp\left( \frac{N}{2} S[q, \{ \bm j_n \}] \right)$ where $S$ is a $\mathcal{O}(1)$ action and $q$ are a set of order parameters of the theory which include the following overlaps
$C_n(t,t') = \frac{1}{N} \w_n^\top \bm\Sigma(t,t') \w_n$ and $Q_n(t,t') = \frac{1}{B} \sum_{\mu=1}^B \Delta_n^\mu(t) \Delta^\mu_n(t')$.
In this high dimension $N,B \to \infty$ limit with $B/N = \mathcal{O}(1)$ and episode length $T = \mathcal{O}(1)$, the order parameters can be obtained from saddle point integration, which requires solving $\frac{\partial S}{\partial q} = 0$. This procedure results in a deterministic learning curve given in equations \eqref{eq:lc1},\eqref{eq:lc2},\eqref{eq:lc3} even though the realization of sampled states are disordered. The TD-error variables $\Delta_n(t)$ become mean zero Gaussians and the $\{ \w_n \}$ also follow a Gaussian distribution with mean and variance determined by the order parameters.  
\end{proof}


Before we explore the predictions of this theory, we first make a few remarks about this result. 
\begin{remark}
Though the theory is technically derived for large batch size $B$, we will show that it provides an accurate description of the loss trajectory even for batches as small as $B=1$. An alternative formulation in terms of recursive averaging reveals transparently which approximations lead to the same result as the mean field theory (Appendix \ref{app:direct_iterate_averaging}). 
\end{remark}
\begin{remark}
    The case where the reward function and/or the value function are inexpressible by the features $\bm\psi$ can also be handled within this framework. In this case, the unlearnable components of the value function act as additional noise which limits performance \cite{bordelon2022sgd}. These can also be handled by our theory, see Appendix \ref{app:convergence_considerations}.
\end{remark}

\begin{remark}
    The limit where $\gamma = 0$ recovers known results in online supervised learning with stochastic gradient methods \cite{varre2021last, bordelon2022sgd, velikanov2023view}. In this limit, the dynamics will converge to zero loss provided the model features are sufficiently rich to represent the true value function. 
\end{remark}

\begin{remark}
 The TD learner with perfect coverage (infinite batch size) at each step will converge to the ground truth $\w_{TD} = \left( \bar{\bm\Sigma} - \gamma \bar{\bm\Sigma}_+ \right)^{-1} \bar{\bm\Sigma} \w_R$ (see Appendix \ref{app:convergence_considerations}). 
\end{remark}


\begin{remark}
$\M_n$ is equivalently defined as $\M_n = \left< (\w - \w_{TD})(\w-\w_{TD})^\top \right>_{\{\tau_{n'}^\mu \}_{n'<n}}$, which measures deviation from the fixed point of gradient flow (vanishing learning rate) dynamics $\w_{TD}$ over random sets of sampled episodes (Appendix \ref{app:derivation_learn_curves}).
\end{remark}

\subsection{Gaussian Approximation}

The theory presented in Section \ref{sec:compute_learn_curves} relies on an approximation of the feature distribution as Gaussian. Similar approximations have been successfully utilized in high dimensional regression problems even when the true features are non-Gaussian \cite{bordelon2020spectrum, loureiro2021learning, hu2022universality, bordelon2022sgd}. We note that an exact, non-asymptotic theory for non-Gaussian features can be provided which closes under knowledge of the fourth cumulants of the features as we show in Appendix \ref{app:nongauss_full_th}, though this theory is especially cumbersome to analyze or evaluate compared to the theory of Section \ref{sec:compute_learn_curves}. Concretely, Proposition \ref{prop:full_result} relies on the following.

\begin{geq_conjecture}
The learning curves for a TD learner with high dimensional features $\{ \bm\psi(s_t) \}_{t=1}^T$ over random $\tau$ are well approximated by the learning curves of a TD learner trained with \textit{Gaussian} features $\bm\psi_{G} \sim \mathcal{N}(\bm\mu,\bm\Sigma + \bm\mu \bm\mu^\top)$ with matching mean and correlations 
\begin{align}
    \bm\mu(t) = \left< \bm\psi(s_t) \right>_{\tau \sim p(\tau)} \ , \quad 
    \bm\Sigma(t,t') = \left< \bm\psi(s_t) \bm\psi(s_{t'})^\top \right>_{\tau \sim p(\tau)} .
\end{align}
where averages are taken over sequences of states $\{ s(t) \} \sim p(\tau)$. 
\end{geq_conjecture}

\begin{figure}[h]
    \centering
    \subfigure[2D Exploration]{\includegraphics[width=0.32\linewidth]{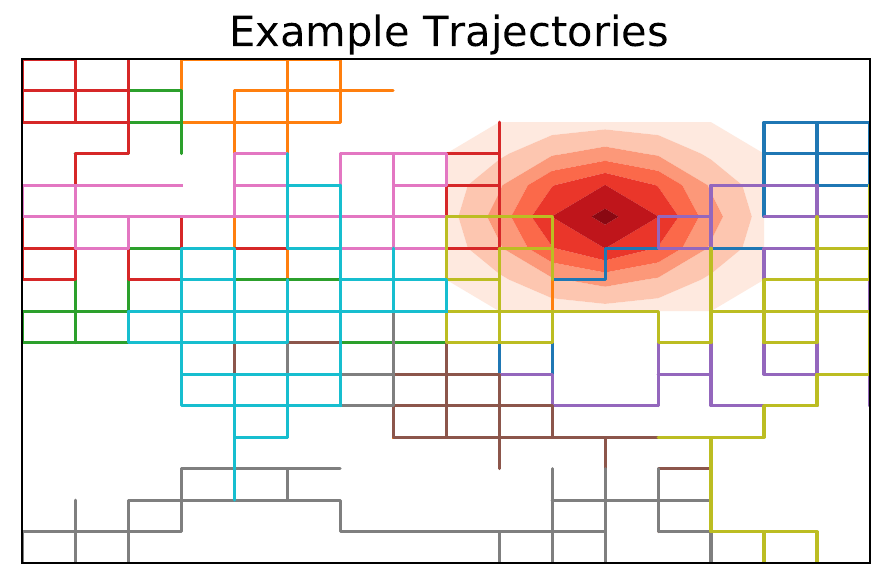}}
    \subfigure[Place Cell/RBF Features $\bm\psi(s_t)$]{\includegraphics[width=0.32\linewidth]{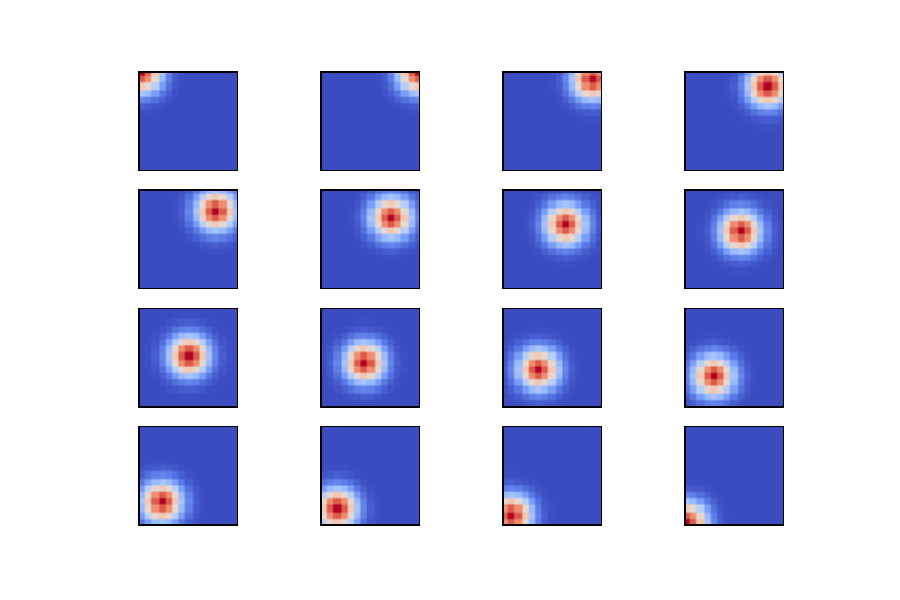}}
    \subfigure[Equivalence Phenomenon]{\includegraphics[width=0.32\linewidth]{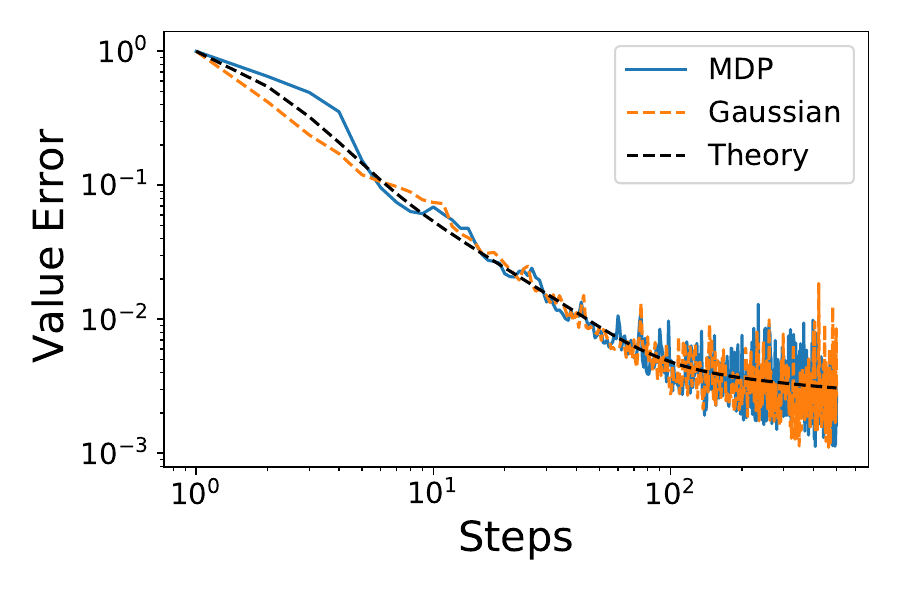}}
    \subfigure[Place Cell Bandwidths]{\includegraphics[width=0.32\linewidth]{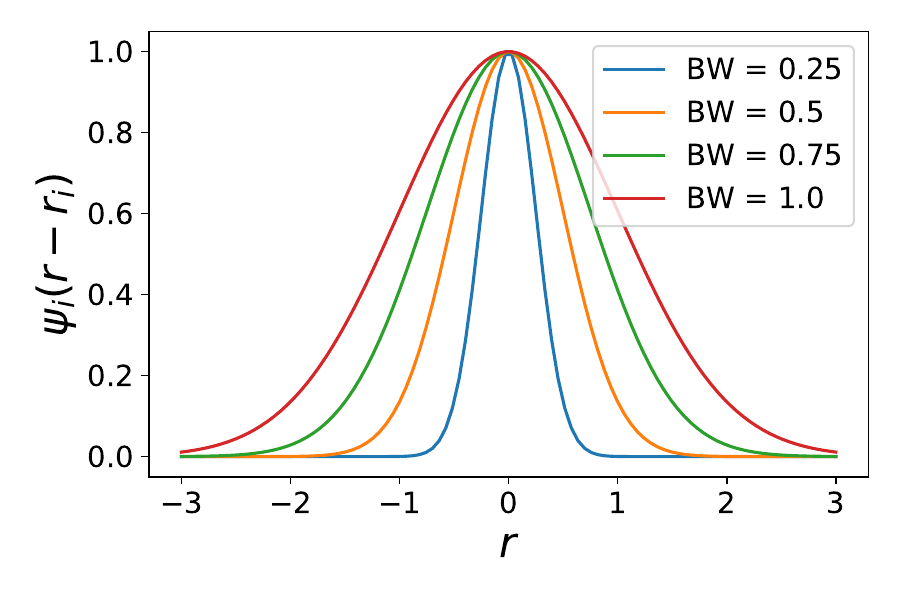}}
    \subfigure[Large batch $B=30$]{\includegraphics[width=0.32\linewidth]{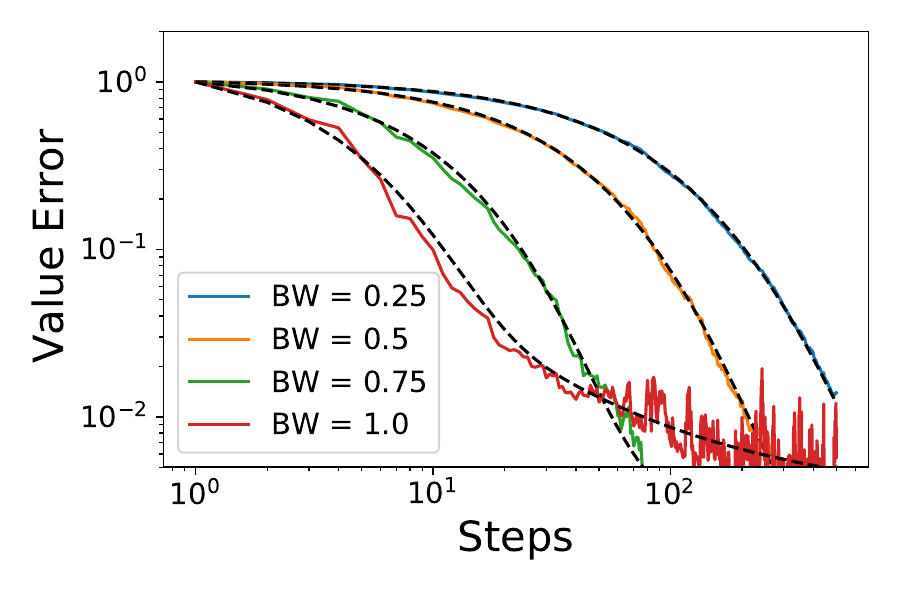}}
    \subfigure[Small batch $B=3$]{\includegraphics[width=0.32\linewidth]{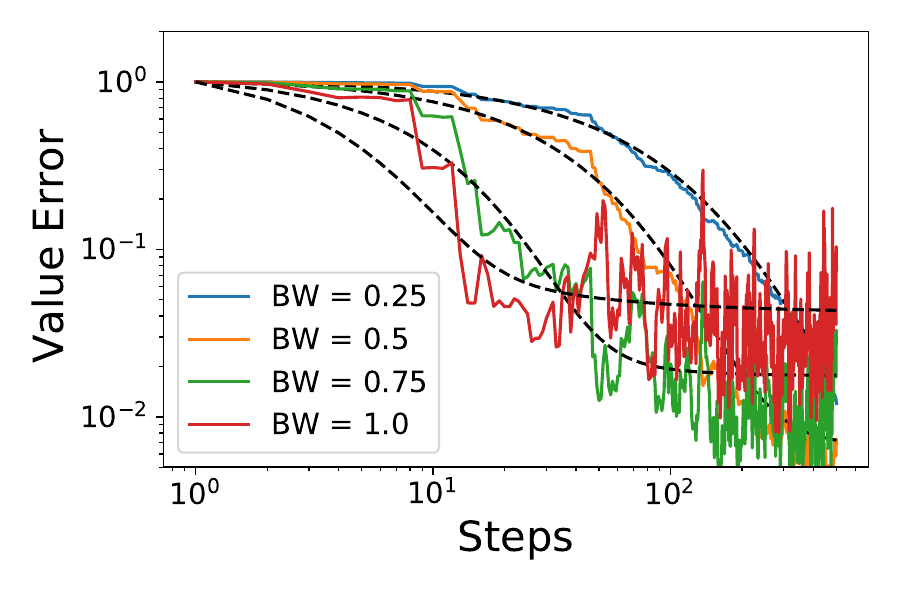}}
    \caption{An illustration of our theory for TD learning. (a) A diffusion process in a 2D grid world generates many possible trajectories through state space. Each colored line is a different trajectory. Reward function is shown in red, with darker red indicating higher reward. (b) When combined with nonlinear place cell feature representation, the state transitions generate a distribution over observed features $\{\bm\psi(s_t)\}$. (c) The value error associated with TD learning for a bump reward function on the true features generated from a single set of MDP trajectories (blue) is compared to training on sampled Gaussian vectors $\{\bm\psi_t\}$ with matching within-episode covariance structure. These single runs of TD learning on either set of features are consistent with the typical case theory (black dashed). (d) The structure of the features alters learning dynamics. We consider, for simplicity, altering the bandwidth (BW) of the place cell features. (e) Varying place cell BW changes the dynamics for both large batch ($B=30$) and (f) small batch ($B=3$) TD learning. There is an optimal BW for a given step size. Small batch stochastic semi-gradient noise is more severe.}
    \label{fig:gauss_equiv_phenomenon}
\end{figure}

One interpretation of this ansatz is that the dependence of the learning curve on higher order cumulants of the features is negligible in high dimensional feature spaces under the square loss. This approximation has been shown to provide an accurate description on realistic supervised learning settings with non-Gaussian data with the square loss in prior works \cite{bordelon2020spectrum,loureiro2021learning, bordelon2022sgd, canatar, simon2022eigenlearning, varre2021last}. As shown in these works, for standard supervised learning, even highly non-Gaussian features $\{\bm\psi(s_t)\}$ have least squares learning curves which are only sensitive to the first two cumulants of the distribution. We do not aim to provide a rigorous proof of this ansatz for TD learning but instead compute the learning curve implied by this assumption and compare to experiments on simple Markov Decision Processes (MDPs). The benefit of this hypothesis in the RL setting is that it abstracts away details of transitions in the state space and instead deals with the correlations of sampled features through time. 

To illustrate an example of the Gaussian Equivalence idea, in Figure \ref{fig:gauss_equiv_phenomenon}, we consider an MDP which is defined by diffusion through a 2-dimensional (2D) state space (Figure \ref{fig:gauss_equiv_phenomenon}(a)). We choose the features $\bm\psi(s)$ to be a collection of localized 2D Radial Basis Function (RBF) bumps which tile the 2D space, similarly to the ``place cell'' neurons found in the mammalian hippocampus \cite{o1976place,moser2008place} (Figure \ref{fig:gauss_equiv_phenomenon}(b)). The feature map is parameterized by the bandwidth of individual ``place cells''. In Figure \ref{fig:gauss_equiv_phenomenon}(c), we show the value error learning curve as a function of the number of steps $n$ (blue) and compare the value estimation error of the MDP with a Gaussian distribution for $\bm\psi(t)$ with matching first and second moments (orange). Lastly, we plot the theoretical prediction of our theory (described in Section \ref{sec:theory_TD}), which is computed under the Gaussian equivalence ansatz (black dashed). We see a remarkable match of the three curves. The equivalence can be used to predict the speed of TD learning for different features, such as place cells with varying bandwidth as we illustrated in Figure \ref{fig:gauss_equiv_phenomenon} (d)-(f). In Figure \ref{fig:gauss_equiv_phenomenon} (e) and (f), we plot the loss trajectories for a single run of TD for each feature set. We observe that bandwidth affects both the learning dynamics and the asymptotic error with an optimal bandwidth at any step. One of our goals will be to elucidate the role of feature quality in learning dynamics. While the large batch dynamics are approximately self-averaging, as shown by the fact that single runs of TD learning coincide with our theoretical typical case theory curves, there is significant semi-gradient variance in the value error at small batch sizes. While we expect Gaussian equivalence to hold for high dimensional features, in low dimensions non-Gaussian effects can significantly alter the learning curves as we show in Appendix \ref{app:gauss_breakdown}. However, for high dimensional features, the equivalence holds for many other feature distributions such as polynomial and fourier features (Appendix \ref{app:other_features}).

\section{Spectral Perspective on Hard Reward Functions}\label{sec:spectral}

\begin{figure}[h]
    \centering
    \subfigure[Sparse $R_1$ or Dense $R_2$ Rewards]{\includegraphics[width=0.36\linewidth]{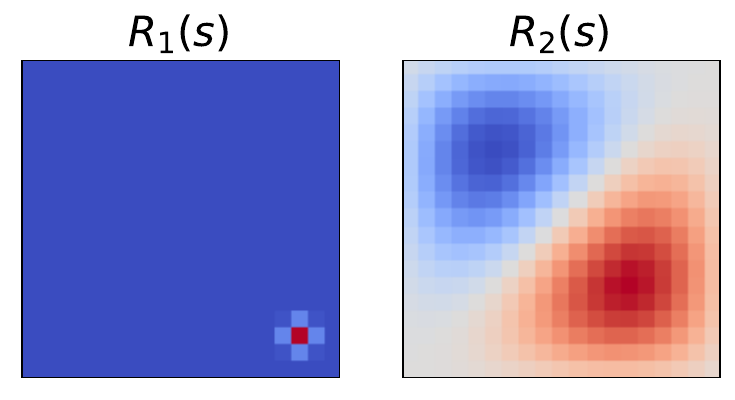}}
    \subfigure[Code-Task Alignment]{\includegraphics[width=0.3\linewidth]{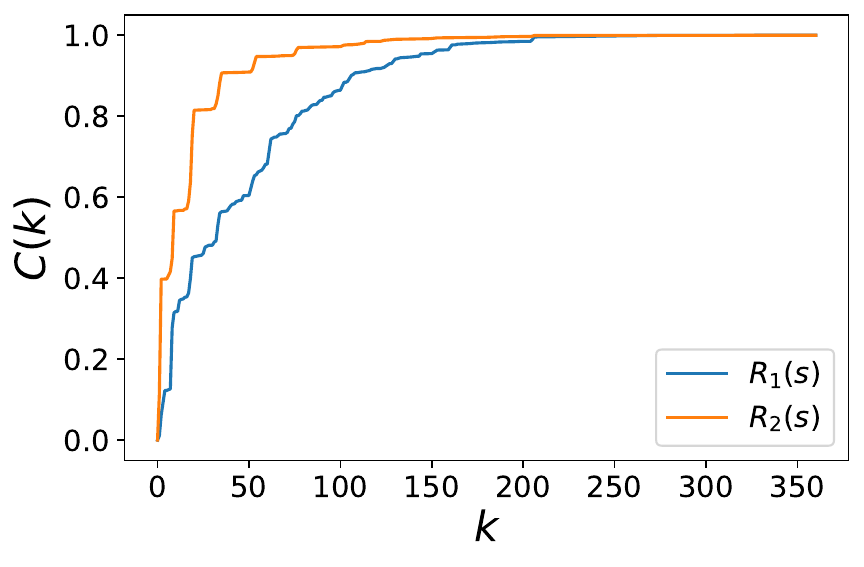}}
    \subfigure[$B=20$ Learning Curves]{\includegraphics[width=0.3\linewidth]{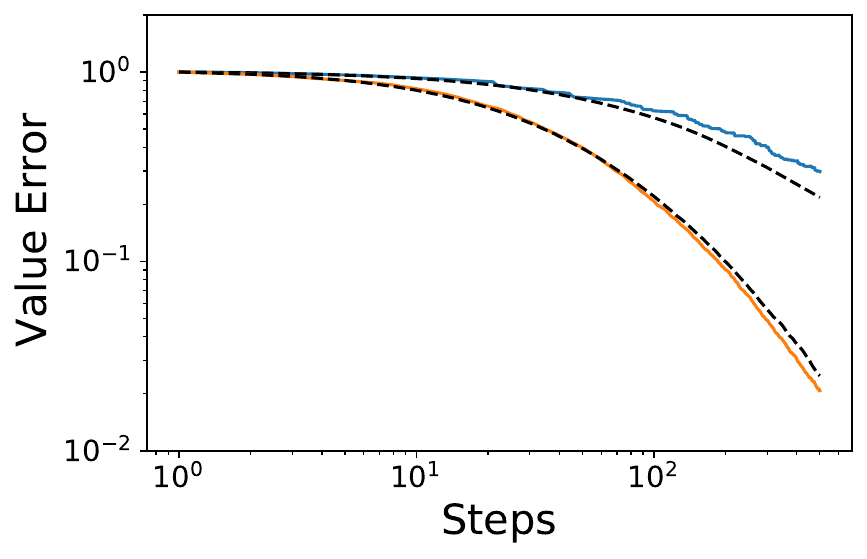}}
    \caption{Reward functions and dynamics which lead to value functions with high spectral alignment to the features can be learned more quickly than those that do not. (a) A sparse and dense reward function in a 2D spatial navigation task can illustrate this effect. (b) The cumulative power distribution $C(k)$ defined from the spectral decomposition of $\bm A = \bar{\bm\Sigma} - \gamma \bar{\bm\Sigma}_+$. Concretely we let $\bm A \u_k = \lambda_k \u_k$ with $\lambda_k$ ordered by real part and $\w_{TD} = \sum_k w_k \u_k$. In the $B\to\infty$ limit the task which has rapidly rising $C(k) = \frac{\sum_{\ell < k} w_\ell^2 }{ \sum_{\ell} w_\ell^2 }$ will converge more quickly than the task with slowly rising $C(k)$. (c) Indeed, for large batch regime ($B=20$) the value error decreases more rapidly for $R_2$ than for $R_1$.  }
    \label{fig:sparse_vs_dense}
\end{figure}

Our theory can provide some insights into the structure of tasks which can be learned easily and which require more sampled trajectories to estimate based on spectral decompositions of the feature covariances. We note that similar spectral arguments have been given in the ODE-limit \cite{lyle_RL_dynamics} and are intimately related to the source conditions used in recent work to identify power-law rates in the large batch regime \cite{NEURIPS2022_32246544}.  

To build our argument, we diagonalize the matrix $\A = \bar{\bm\Sigma} - \gamma \bar{\bm\Sigma}_+$, obtaining $\A \u_k = \lambda_k \u_k$, noting that eigenvalues $\lambda_k$ can be complex. We then expand the TD solution in this basis $\w_{TD} =\sum_k w_k \u_k$. The theory predicts that, the average learned weights will be $\left< \w_n \right> = \sum_k |1-\eta\lambda_k|^n e^{i \theta_k n } w_k \u_k$, where $|\cdot|$ is complex modulus and $\theta_k = \text{Arg}(1-\eta\lambda_k)$. We can therefore order the modes by their convergence timescales $|1-\eta\lambda_k|$. Given this ordering of timescales, we can order the modes $k$ from those with smallest to largest timescales. Given this ordering, we see that tasks can be learned efficiently are those with most of the norm of $\w_k$ in the modes with small timescales. We quantify how well aligned a task is to a given feature representation by computing a cumulative power distribution for the target weights $C(k) = \frac{\sum_{\ell < k} w_\ell^2}{\sum_\ell w_\ell^2}$. If this quantity rises rapidly with $k$ then the task can be learned from a small number of samples \cite{canatar}. 

We consider again, the setting of Figure \ref{fig:gauss_equiv_phenomenon}, the 2D exploration MDP but now contrast two different reward functions. 
In Figure \ref{fig:sparse_vs_dense} we show that this spectral decomposition can account for the gaps in loss for a place cell code in learning a sparse or dense reward function (Figure \ref{fig:sparse_vs_dense}(a)). As expected the cumulative power rises more rapidly for the dense reward function $R_2(s)$ (Figure \ref{fig:sparse_vs_dense}(b)). As a consequence, the value error converges to zero more rapidly than for the sparse rewards.

\section{Stochastic Semi-Gradient Learning Plateaus and Annealing Strategies}\label{sec:plateaus_annealing}
\begin{figure}[!h]
    \centering
    \subfigure[Episodic Batches]{\includegraphics[width=0.32\linewidth]{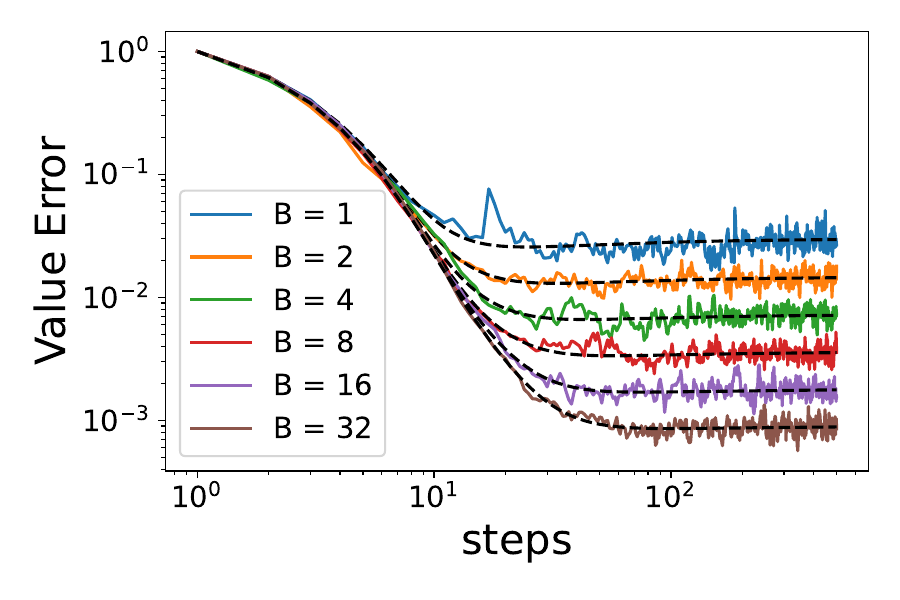}}
    \subfigure[Discount Factor]{\includegraphics[width=0.32\linewidth]{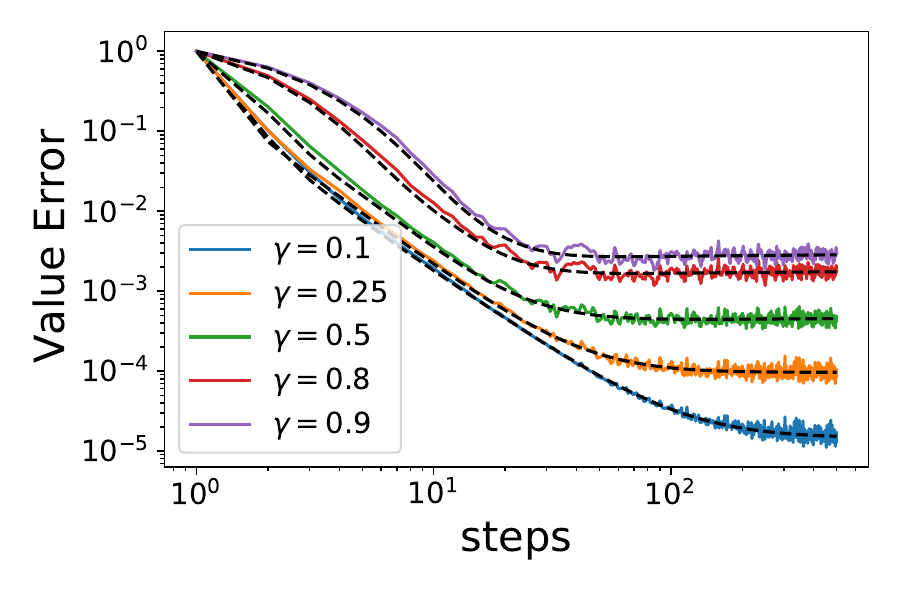}}
    \subfigure[Fixed Learning Rate]{\includegraphics[width=0.32\linewidth]{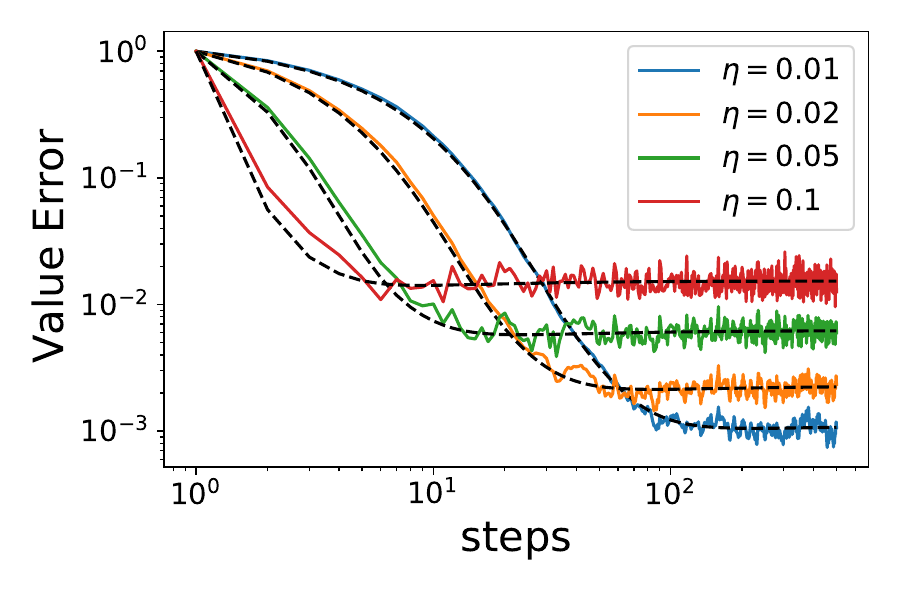}}
    \subfigure[Annealed Learning Rate]{\includegraphics[width=0.32\linewidth]{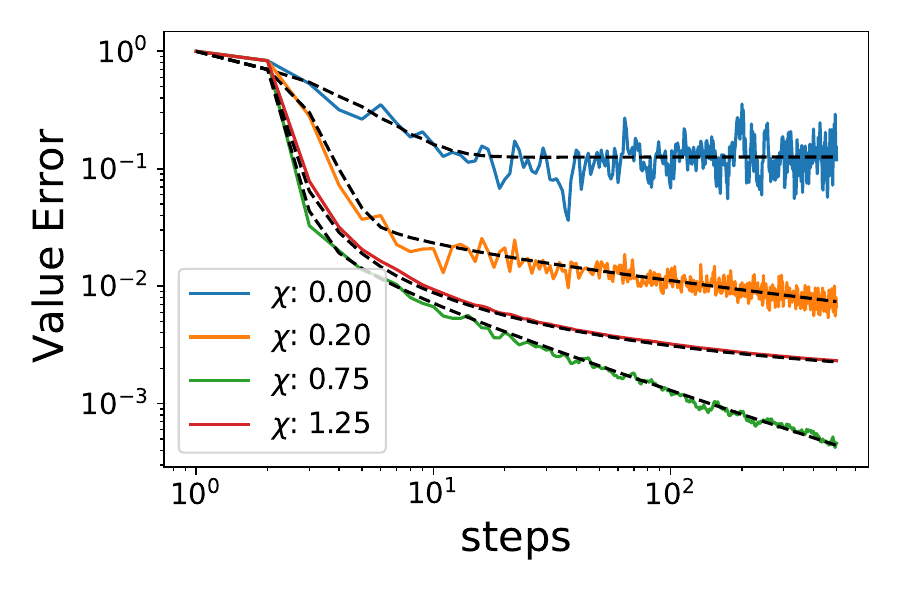}}
    \subfigure[Optimal Annealing]{\includegraphics[width=0.32\linewidth]{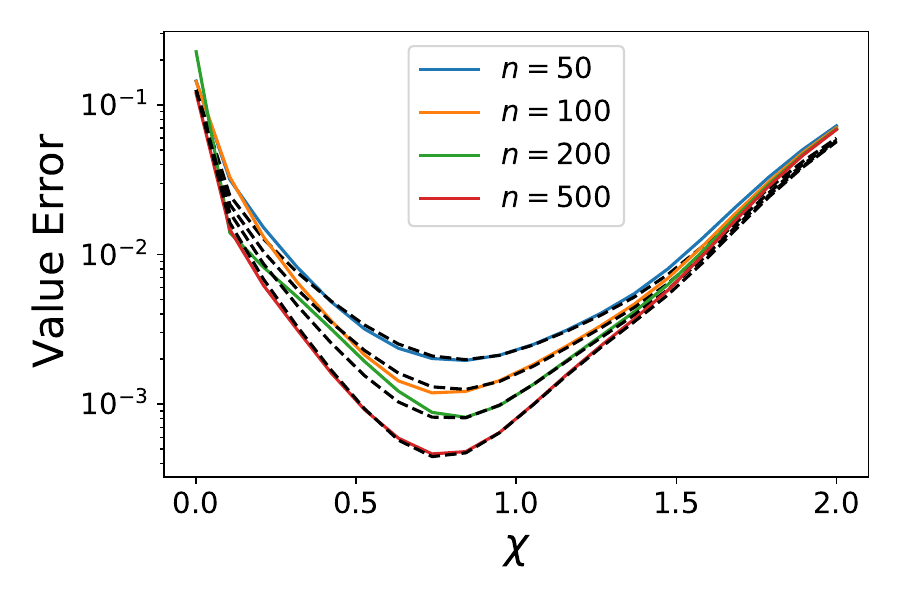}}
    \caption{ Finite batch size, discount factor and learning rate all contribute to a stochastic semi-gradient plateau in the TD dynamics. The features are generated from a synthetic power law covariance with exponential temporal autocorrelation (see Appendix \ref{app:numerical}). Dashed black lines are theory. In general, for fixed learning rate $\eta$, the plateau scales as $\mathcal{O}( \eta \gamma^2 B^{-1})$. (a) Larger batch sizes $B$ reduce SGD noise and leads to a lower plateau in the reducible value error for a decoupled power-law feature model. (b)  Larger discount factor $\gamma$ and (c) larger learning rate $\eta$ lead to higher SGD plateau floor. (d) An annealing strategy $\eta_n \sim \eta_0 n^{-\chi}$ for $\chi > 0$ can allow one to avoid the plateau. For slow annealing (small $\chi$), the error scales as $\mathcal{L}_n \sim \mathcal{O}( n^{-\chi} )$. (e) The value error as a function of the learning rate annealing exponent $\chi$ defined by $\eta_n = \eta_0 n^{-\chi}$. For this task, the optimal exponent balances the scale of the asymptote with the rate of convergence.  }
    \label{fig:plateaus_annealing}
\end{figure}

The stochastic noise from TD learning has striking qualitative differences from SGD noise in the standard supervised case. In standard supervised learning (such as $\gamma=0$ version of this theory), the stochastic gradient noise does not prevent the model from fitting the target function with zero error provided the features are sufficiently rich to represent the target function. However, this is not the case in TD learning, where the predicted value $\hat{V}(s)$ is bootstrapped using the model's weights $\w_n$ at each iteration $n$. This leads to asymptotic plateaus in learning curves. Our theory can predict these plateaus and their scaling whose proof is given in Appendix \ref{app:scaling_fixedpt}.
\begin{proposition}\label{prop:scaling}
    Our theoretical learning curves exhibit a fixed point for the value error dynamics for finite $B$ and non-zero $\eta$ and $\gamma$. For small $\frac{\eta \gamma^2}{B}$, we deduce that $\M$ satisfies a self-consistent asymptotic scaling of the form $\M = \mathcal{O}\left( \frac{\eta \gamma^2}{B} \right)$ impliying an asymptotic value error scaling of $\mathcal{L} \sim \frac{1}{N} \text{Tr} \M \bar{\bm\Sigma} \sim \mathcal{O}\left( \frac{\eta \gamma^2}{B} \right)$. 
\end{proposition}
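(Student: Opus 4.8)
The plan is to locate the stationary point of the covariance recursion \eqref{eq:lc2} and read off its magnitude by a self-consistent power count in the small parameter $\eta\gamma^2/B$. First I would impose stationarity $\M_{n+1}=\M_n=\M$ in \eqref{eq:lc2} and expand the drift, $(\I-\eta\A)\M(\I-\eta\A)^\top=\M-\eta(\A\M+\M\A^\top)+\eta^2\A\M\A^\top$. Cancelling the common $\M$ turns the recursion into a Lyapunov (Sylvester) equation
\[
\eta\left(\A\M+\M\A^\top\right)-\eta^2\,\A\M\A^\top=\mathcal{F}(\M),\qquad \mathcal{F}(\M)=\frac{\eta^2}{\alpha^2 T^2}\sum_{t,t'}Q(t,t')\,\bm\Sigma(t,t'),
\]
where $\mathcal{F}$ is the stochastic forcing of \eqref{eq:lc2}; its explicit prefactor is $\mathcal{O}(\eta^2)$ and, being a minibatch average, carries the subsampling-variance scaling $B^{-1}$ that the proposition isolates. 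Because the TD semi-gradient is affine in $\w_n$, the mean iterate obeys a closed linear recursion $\langle\w_{n+1}\rangle=\langle\w_n\rangle+\eta(\bar{\bm\Sigma}\w_R-\A\langle\w_n\rangle)$ whose fixed point is exactly $\langle\w\rangle=\w_{TD}=\A^{-1}\bar{\bm\Sigma}\w_R$, independent of $\M$; I may therefore evaluate the forcing at the mean and regard $\M$ as the stationary spread about $\w_{TD}$, consistent with the earlier identification of $\M_n$.

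The next step is to extract the $\gamma$-scaling of the forcing. Writing $\w_n=\w_{TD}+\delta\w$ with $\langle\delta\w\rangle=0$ and $\langle\delta\w\,\delta\w^\top\rangle=\M$, Gaussian (Wick) averaging of \eqref{eq:lc3} decomposes $Q(t,t')=Q^{(0)}(t,t')+\tfrac1N\text{Tr}\!\left[\tilde{\bm\Sigma}(t,t')\M\right]$, where $Q^{(0)}$ is the mean-path value obtained at $\delta\w=0$ and $\tilde{\bm\Sigma}(t,t')=\bm\Sigma(t,t')-\gamma\bm\Sigma(t,t'+1)-\gamma\bm\Sigma(t+1,t')+\gamma^2\bm\Sigma(t+1,t'+1)$. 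The crux is the claim $Q^{(0)}=\mathcal{O}(\gamma^2)$. I would obtain it from the fixed-point identity $\w_{TD}=\A^{-1}\bar{\bm\Sigma}\w_R$, which rearranges to $\bar{\bm\Sigma}(\w_R-\w_{TD})=-\gamma\,\bar{\bm\Sigma}_+\w_{TD}$ and hence forces the residual weight $\w_e\equiv\w_R-\w_{TD}=\mathcal{O}(\gamma)$: as $\gamma\to0$ the TD target collapses onto $\w_R$. Substituting into the four terms of $Q^{(0)}$, the first (two powers of $\w_e$), the two cross terms (one explicit $\gamma$ times one $\w_e$), and the last (explicit $\gamma^2$) are each $\mathcal{O}(\gamma^2)$, so $\tfrac1{T^2}\sum_{t,t'}Q^{(0)}(t,t')\bm\Sigma(t,t')=\mathcal{O}(\gamma^2)$.

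With these ingredients the scaling follows by balancing. To leading order I would drop $\eta^2\A\M\A^\top$ (down by one power of $\eta$) and the $\M$-linear part of $\mathcal{F}$, leaving $\eta(\A\M+\M\A^\top)=\mathcal{O}(\eta^2\gamma^2 B^{-1})$. The Lyapunov operator $\mathcal{X}\mapsto\A\mathcal{X}+\mathcal{X}\A^\top$ is invertible (with a unique positive-semidefinite solution for positive-semidefinite forcing) exactly under the stability hypothesis that $\A$ has spectrum in the right half-plane — the same condition ensuring $|1-\eta\lambda_k|<1$ and convergence of the mean-path dynamics. Inverting and cancelling one power of $\eta$ gives $\M=\mathcal{O}(\eta\gamma^2 B^{-1})$, whence $\mathcal{L}=\tfrac1N\text{Tr}\,\bar{\bm\Sigma}\M=\mathcal{O}(\eta\gamma^2 B^{-1})$ by \eqref{eq:lc1}. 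Finally I would check self-consistency of the neglected terms: with $\M=\mathcal{O}(\eta\gamma^2 B^{-1})$ the $\M$-linear forcing is $\mathcal{O}(\eta^2 B^{-1}\cdot\eta\gamma^2 B^{-1})$, smaller than the retained forcing by a factor $\eta/B$, while the quadratic drift is smaller by $\eta$; both are negligible precisely when $\eta\gamma^2/B$ is small, which closes the argument.

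I expect the main obstacle to be the crux step $Q^{(0)}=\mathcal{O}(\gamma^2)$: one must show that the residual \emph{un-projected} TD-error variance at the fixed point is genuinely second order in $\gamma$, which relies on the smooth collapse $\w_{TD}\to\w_R$ and a careful accounting of the temporally shifted covariances $\bar{\bm\Sigma}_+$ and $\bm\Sigma(t,t'\pm1)$. A secondary technical point is controlling the perturbative expansion in $\eta\gamma^2/B$ rather than treating it as merely formal, i.e.\ ensuring the stationary covariance is unique and the Lyapunov operator stays invertible once the $\eta$-corrections are reinstated.
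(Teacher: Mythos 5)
Your proposal is correct and follows essentially the same route as the paper's own proof (Appendix on scaling of asymptotic fixed points): both impose stationarity on the covariance recursion to obtain a Lyapunov-type fixed-point equation, use the fixed-point identity $\w_R = \w_{TD} - \gamma \bar{\bm\Sigma}^{-1}\bar{\bm\Sigma}_+ \w_{TD}$ (so that the residual $\w_R - \w_{TD}$ is $\mathcal{O}(\gamma)$) to show every constant term in the TD-error covariance forcing is $\mathcal{O}(\gamma^2)$, and then balance the $\mathcal{O}(\eta)$ linear drift against the $\mathcal{O}(\eta^2\gamma^2/B)$ forcing to deduce the self-consistent scaling $\M = \mathcal{O}(\eta\gamma^2/B)$ and hence $\mathcal{L} = \mathcal{O}(\eta\gamma^2/B)$. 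Your explicit invocation of Lyapunov-operator invertibility under the stability of $\A$ and the order-by-order check that the $\M$-dependent forcing and the quadratic drift are subleading simply make explicit what the paper leaves implicit.
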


In Figure \ref{fig:plateaus_annealing}, we demonstrate that our theory predicts the plateaus and their scaling as a function of finite batch size $B$ (Figure \ref{fig:plateaus_annealing}(a)), non-zero discount factor $\gamma > 0$ (Figure \ref{fig:plateaus_annealing}(b)) and non-negligible learning rate (Figure \ref{fig:plateaus_annealing}(c)). 

A strategy used in the literature to increase rates of convergence and improve asymptotic behavior is adaptation of the learning learning through an annealing schedule \cite{sutton2018reinforcement,jacobs1988increased,dabney2012adaptive, dalal2018finite}. To overcome this plateau in the loss, we consider annealing the learning rate $\eta_n$ with iteration $n$. In Figure \ref{fig:plateaus_annealing}(d), we show the effect of annealing the learning rate as a power law $\eta_n  =\eta_0 n^{-\chi}$ for some non-negative exponent $\chi$. For $\chi = 0$ the learning rate is constant and a fixed plateau is reached. For small nonzero $\chi$, such as $\chi=0.2$, the value error is, after an initial transient, always near its instantaneous fixed point plateau so the loss scales linearly with the learning rate, giving the asymptotic rate $\mathcal{L}_n \sim \mathcal{O}(n^{-\chi})$. For large $\chi$, the learning rate decreases very quickly and the plateau is never reached. Our approach can be used to find an optimal annealing exponent $\chi$ and in Figure \ref{fig:plateaus_annealing}(e), we show that the optimal annealing exponent balances these effects and is well predicted by our theory.

\section{Reward Shaping}\label{sec:reward_shaping}

\begin{figure}
    \centering
    \subfigure[Geometry of Reward Shaping]{\includegraphics[width=0.37\linewidth]{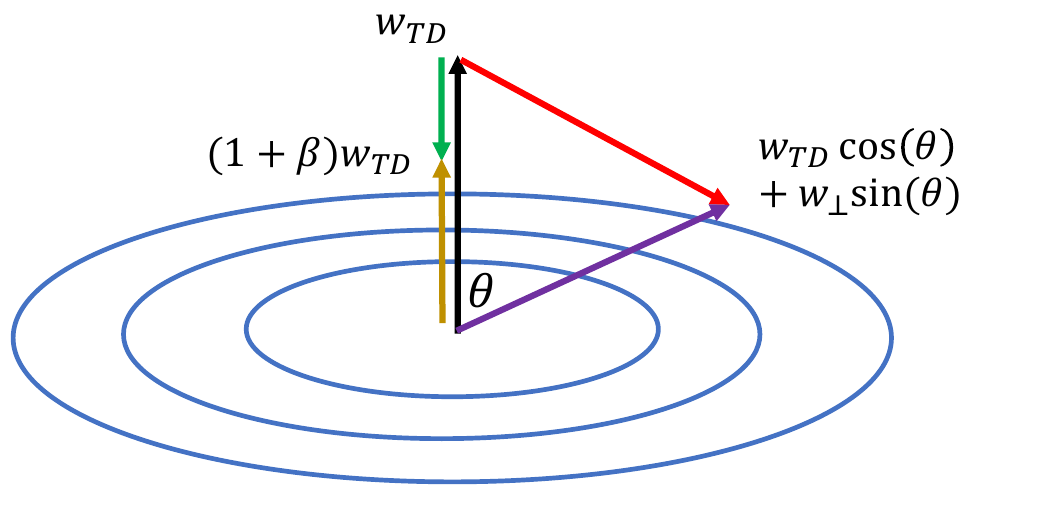}}
    \subfigure[Scale-based Shaping]{\includegraphics[width=0.3\linewidth]{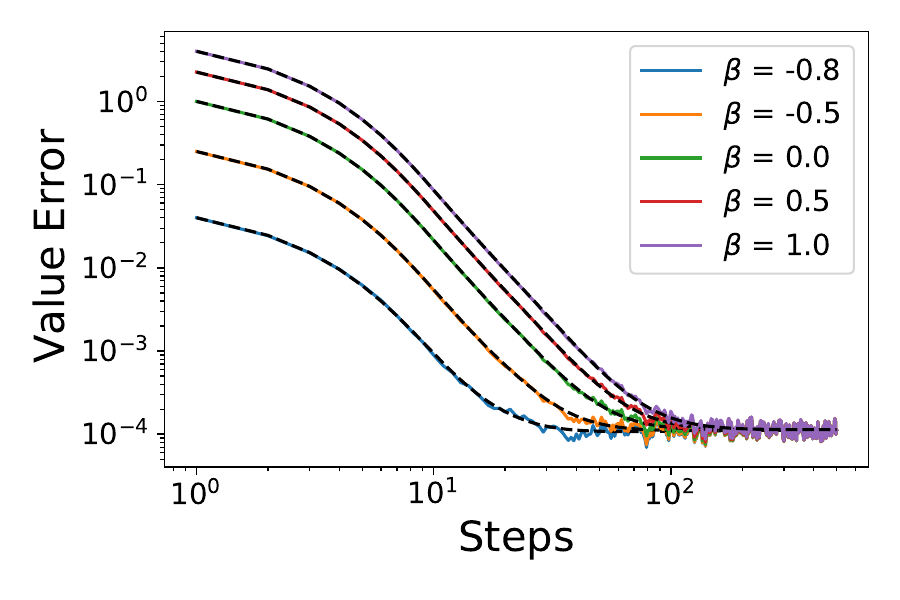}}
    \subfigure[Rotation-based Shaping]{\includegraphics[width=0.3\linewidth]{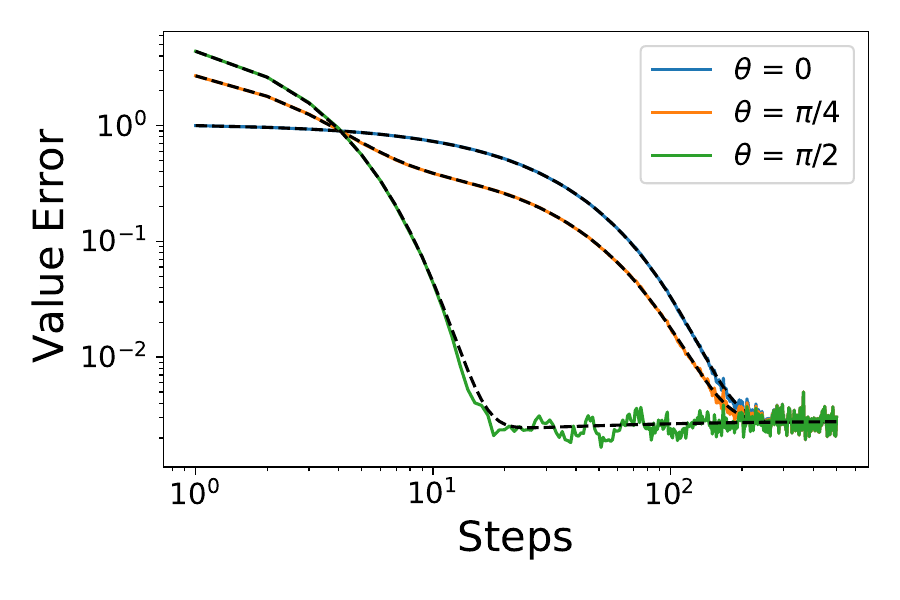}}
    \caption{The theory can be used to understand how reward shaping decisions alter temporal difference learning dynamics. (a) A visualization of possible reward shaping potentials $\phi(s)=\w_\phi \cdot \bm\psi(s)$ strategies in feature space. Probability density level curves for the features are depicted in blue. Reshaping with $\w_\phi = \beta \w_{TD}$ for scale factor $\beta$ merely changes the scale of weights which must be recovered (gold) and does not change timescales of TD dynamics. (b) The value error dynamics for the scale based reward shaping for the features in Figure \ref{fig:plateaus_annealing}. On the other hand, rotation based reward shaping where $\w_{\phi}$ is not parallel to $\w_V$ (red) leads to a potentially helpful mixture of timescales if the new target vector is more aligned with feature dimensions with high variance (purple). In (c), we plot loss curves for rotation angle $\theta$ between the original mode $\w_V$ and the top eigenvector of the feature covariance matrix $\bm{\bar \Sigma}$. Dashed black lines are theory.}
    \label{fig:reward_shaping}
\end{figure}

Another strategy to improve the learning dynamics in reinforcement learning algorithms is reward shaping \cite{ng1999policy}. In standard supervised learning, the goal is to directly approximate the target objective given a cost function. However, in reinforcement learning, the objective is not to estimate rewards at each state directly but the discounted sum of future rewards, the value function. Importantly, many different reward schedules can lead to identical value functions. Reward shaping exploits this symmetry to speed up learning by altering the structure of TD updates and SGD noise. Here, we provide a theoretical description of the changes in the learning dynamics due to reward shaping which suggests they can be understood through a change of the alignment between the original rewards and the reshaped rewards in the space of the features used to represent the states. 

The original ideas around reward shaping were inspired by work in experimental psychology and were closer to what is now studied as curriculum learning \cite{skinner1965science,gullapalli1992shaping,bengio2009curriculum}. Reward shaping as currently used in reinforcement learning directly changes the reward function by adding a potential-based shaping function $F$ such that $F(s_t,a,s_{t+1}) =\gamma \phi(s_{t+1})-\phi(s_t)$ \cite{ng1999policy}. In each step of the algorithm we feed the following \textit{reshaped rewards} $\tilde{R}$ to the TD learner
\begin{align}
    \tilde{R}(s_t) = \begin{cases}
    R(s_t) - \gamma \phi(s_{t+1})  & t = 0
    \\
    R(s_t) + \phi(s_t) - \gamma \phi(s_{t+1})  & t > 0
    \end{cases}.
\end{align}
We note that this transformation simply offsets the target value function by $\phi(s)$ as the series above telescopes with a cancellation of $\phi(s_t)$ between the $t-1$ and $t$-th terms \cite{ng1999policy} (see Appendix \ref{app:reward_shaping}). However, the dynamics of TD learning with these reshaped rewards $\tilde R$ is quite distinct from the dynamics with original rewards $R$. Here, we study the case where we can express $\phi(s)$ as a linear function of our features: $\phi(s) = \bm\psi(s) \cdot \w_{\phi}$. This leads to a change in the dynamics for $\M_n$ and $\left< \w_n \right>$ that we describe in the Appendix \ref{app:reward_shaping}. 

In Figure \ref{fig:reward_shaping}, we illustrate the possible benefits of reward shaping. We explore two types of reward shaping. First, a scale based reward shaping where $\w_\phi$ is parallel to the target TD weights $\w_{TD}$. This merely changes the overall scale of the weights needed to converge in the dynamics, leading to similar timescales and an identical plateau for TD learning as we show in Figure \ref{fig:reward_shaping} (b). On the other hand, reward shaping which rotates the fixed point of the TD dynamics into directions of higher feature variance can improve timescales of convergence. In Figure \ref{fig:reward_shaping} (c), we show an example where we vary the angle $\theta$ of the shaped-TD fixed point (see also Appendix \ref{app:reward_shaping}). \looseness=-1 


\section{TD Learning Plateaus in More Realistic Settings}

In this section, we test if some of the phenomena observed in our theory and experiments also hold in more realistic settings. We perform TD learning with Fourier features to evaluate a pre-trained policy on MountainCar-v0. As expected, we see that the value error plateaus to an error level determined by both the learning rate (Figure \ref{fig:mc}a) and batch size (Figure \ref{fig:mc}b) due to semigradient noise. 

\begin{figure}[h]
    \centering
    \subfigure[Varying Learning Rates]{\includegraphics[width=0.42\linewidth]{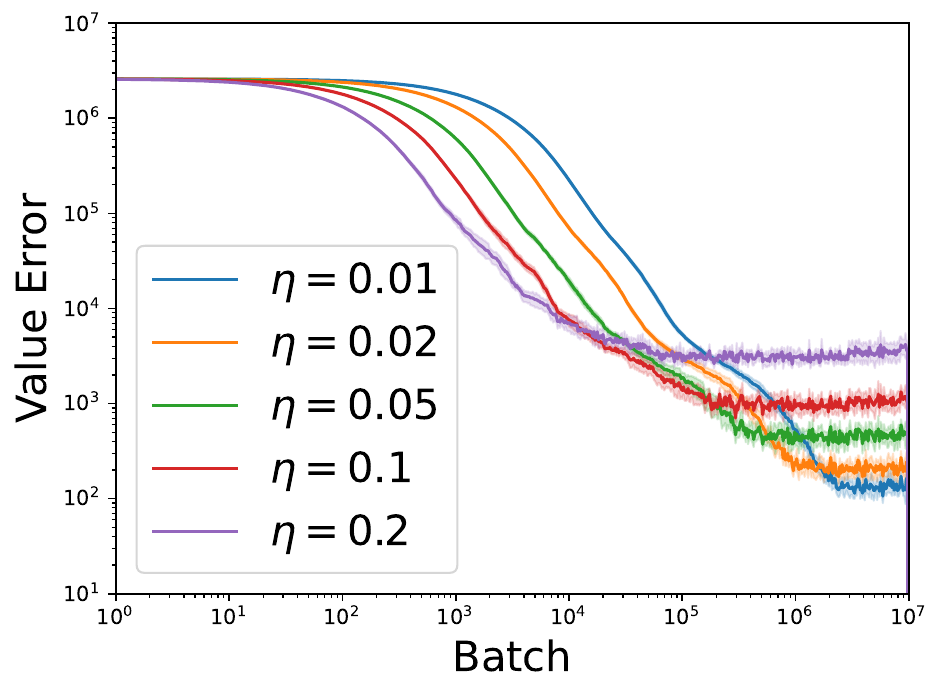}}
    \subfigure[Varying Batch Sizes]{\includegraphics[width=0.42\linewidth]{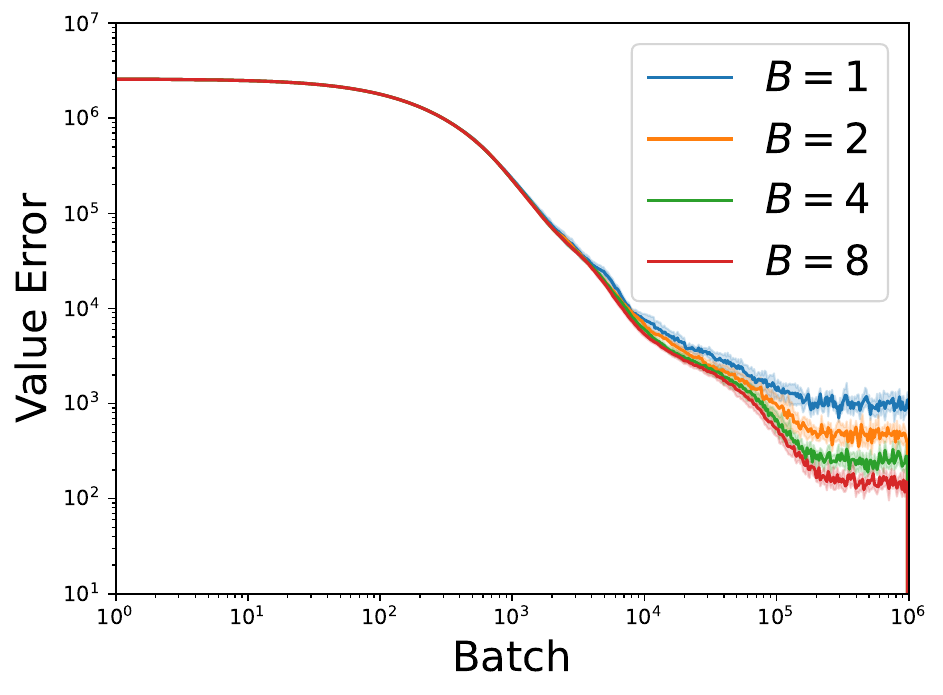}}
    \\
    \caption{Policy evaluation in MountainCar-v0 environment. The policy was learned with tabular \(\epsilon\)-greedy Q-learning (see Appendix \ref{app:mountain_car} for details). (a) Value error curves for different \(\eta\) when \(B = 1\). (b) Value error curves for different \(B\) with \(\eta = 0.1\). Shaded area denotes 95\% confidence interval over 10 seeds. 
    }
    \label{fig:mc}
\end{figure}
We show that the plateaus obey the predicted scalings of $\mathcal{O}(\eta B^{-1})$ in Appendix \ref{app:mountain_car}.

\section{Discussion}

Our work presents a new approach using concepts from statistical physics to derive average-case learning curve for \textit{policy evaluation} in TD-learning. However, it is only a first step towards a new theory of learning dynamics in reinforcement learning. 

One major limitation of the present work is that it concerns linear function approximation where the features representing states/actions are fixed throughout learning. This limit can apply to neural networks in the ``lazy'' regime of training \cite{chizat2019lazy, jacot}, however it cannot account for neural networks that adapt their internal representations to the structure of the reward function. This differs from the setting of most practical algorithms, including in deep reinforcement learning, that specifically adapt their representations. 

Our theory provides a description of learning dynamics through a set of iterative equations (Proposition \ref{prop:full_result}). In Figure \ref{fig:gauss_equiv_phenomenon} we evaluate these dynamics for a simple MDP but although the predicted dynamics present an excellent fit to the empirical simulations, the iterative equations can be difficult to interpret and computationally expensive to evaluate in a larger network and more realistic tasks. Nevertheless, our equations can be used to derive some scaling between key parameters of the algorithm for example by studying their fixed points as in Proposition \ref{prop:scaling}. 

 Here, we considered the simplest form of temporal difference learning, batched online TD(0). In future work, it will be important to further characterize the behavior for online TD(0) with batch size $B=1$ and to expand our approach to TD($\lambda$) and other return distributions. Similarly, expanding our theory to the offline setting, in which the buffer of resampled trajectories would be of finite size, could provide an understanding of how the interactions between parameters govern convergence and divergence \cite{sutton2018reinforcement,van2018deep,levine2020offline,perdomo2022sharp}. \looseness=-1 

Another limitation of our work is that we only considered the setting of \textit{policy evaluation} with a fixed policy. The goal of an RL agent is to learn how to act in the work and not merely to represent the value is its states. Unlike in supervised learning, the changes in the value function affect the policy but in many of RL algorithms, for example in \textit{actor-critic} architecture, there is a separation of the \textit{policy evaluation} (critic) and the \textit{policy learning} (actor) \cite{konda1999actor,mnih2016asynchronous}. Such algorithms estimate the value associated with state/action pairs under a given policy and then use this information to make beneficial updates to the policy, usually with the value and policy functions approximated by separate neural networks. In this paper, we only treated the first part of this process. Recently, a related approach has been used to analyse the dynamics of \textit{policy learning} in an ``RL perceptron" setup \cite{patel2023rl}. A full theory of reinforcement learning combining \textit{policy evaluation} and \textit{policy learning} remains difficult due to the interaction between the two processes, but combining these approaches would be fruitful. One promising direction is in settings where the timescales of the two processes are different \cite{konda2004convergence}, such as when \textit{policy learning} occurring at a much slower rate which is often the case in practice. \looseness=-1 

Beyond developing a theory of learning dynamics in reinforcement learning, the approach could be used in neuroscience to understand how neural representation of space or value can shape the learning dynamics at the behavioral level. Ideas from reinforcement learning have been extremely influential to understand phenomena observed in neuroscience and have been mapped directly onto specific brain circuits \cite{schultz1997neural,doya2008modulators,behrens2018cognitive}. The place cells of the hippocampus \cite{o1976place} exhibit localized tuning as the example in Figure \ref{fig:gauss_equiv_phenomenon} and together with grid cells in enthorinal cortex are thought to be crucial for navigation in spatial and cognitive spaces and their tuning  is shaped by experience \cite{moser2008place,behrens2018cognitive,stachenfeld2017hippocampus,sosa2021navigating}. Our theory specifically link the structure of representations, policy and reward to learning rates, which can all be experimentally measured simultaneously and could shed on light on how the spectral properties of representations govern learning and navigation \cite{mcnamee2021flexible,behrens2018cognitive}, similarly to how the mean field theories we have used here can explain learning of sensory features \cite{bordelon2022population}. Future work could straightforwardly extend this DMFT formalism to deal with replay of sampled experiences during TD learning \cite{fedus2020revisiting} at the cost of tracking correlations of weight updates across iterations of the algorithm \cite{mignacco2020dynamical}. 

To summarize, our work provide a new promising direction towards a theory of learning dynamics in reinforcement learning in artificial and biological agents. \looseness=-1 

\begin{ack}
BB is supported by a Google PhD Fellowship. CP and BB were supported by NSF grant DMS-2134157. CP is further supported by NSF CAREER Award IIS-2239780, and a Sloan Research Fellowship.   PM was supported by NIH grant 5R01DC017311 to Venkatesh Murthy and Naoshige Uchida. HK was supported by the Harvard College Research Program. This work has been made possible in part by a gift from the Chan Zuckerberg Initiative Foundation to establish the Kempner Institute for the Study of Natural and Artificial Intelligence. We thank Jacob Zavatone-Veth for useful discussions and comments on this manuscript. 
\end{ack}

\clearpage

\bibliography{mybib}

\begin{thebibliography}{84}
\providecommand{\natexlab}[1]{#1}
\providecommand{\url}[1]{\texttt{#1}}
\expandafter\ifx\csname urlstyle\endcsname\relax
  \providecommand{\doi}[1]{doi: #1}\else
  \providecommand{\doi}{doi: \begingroup \urlstyle{rm}\Url}\fi

\bibitem[Sutton and Barto(2018)]{sutton2018reinforcement}
Richard~S Sutton and Andrew~G Barto.
\newblock \emph{Reinforcement learning: An introduction}.
\newblock MIT press, 2018.

\bibitem[Tesauro(1995)]{tesauro1995temporal}
Gerald Tesauro.
\newblock Temporal difference learning and td-gammon.
\newblock \emph{Communications of the ACM}, 38\penalty0 (3):\penalty0 58--69,
  1995.

\bibitem[Mnih et~al.(2015)Mnih, Kavukcuoglu, Silver, Rusu, Veness, Bellemare,
  Graves, Riedmiller, Fidjeland, Ostrovski, et~al.]{mnih2015human}
Volodymyr Mnih, Koray Kavukcuoglu, David Silver, Andrei~A Rusu, Joel Veness,
  Marc~G Bellemare, Alex Graves, Martin Riedmiller, Andreas~K Fidjeland, Georg
  Ostrovski, et~al.
\newblock Human-level control through deep reinforcement learning.
\newblock \emph{nature}, 518\penalty0 (7540):\penalty0 529--533, 2015.

\bibitem[Silver et~al.(2016)Silver, Huang, Maddison, Guez, Sifre, Van
  Den~Driessche, Schrittwieser, Antonoglou, Panneershelvam, Lanctot,
  et~al.]{silver2016mastering}
David Silver, Aja Huang, Chris~J Maddison, Arthur Guez, Laurent Sifre, George
  Van Den~Driessche, Julian Schrittwieser, Ioannis Antonoglou, Veda
  Panneershelvam, Marc Lanctot, et~al.
\newblock Mastering the game of go with deep neural networks and tree search.
\newblock \emph{nature}, 529\penalty0 (7587):\penalty0 484--489, 2016.

\bibitem[Wurman et~al.(2022)Wurman, Barrett, Kawamoto, MacGlashan, Subramanian,
  Walsh, Capobianco, Devlic, Eckert, Fuchs, et~al.]{wurman2022outracing}
Peter~R Wurman, Samuel Barrett, Kenta Kawamoto, James MacGlashan, Kaushik
  Subramanian, Thomas~J Walsh, Roberto Capobianco, Alisa Devlic, Franziska
  Eckert, Florian Fuchs, et~al.
\newblock Outracing champion gran turismo drivers with deep reinforcement
  learning.
\newblock \emph{Nature}, 602\penalty0 (7896):\penalty0 223--228, 2022.

\bibitem[Kiran et~al.(2021)Kiran, Sobh, Talpaert, Mannion, Al~Sallab, Yogamani,
  and P{\'e}rez]{kiran2021deep}
B~Ravi Kiran, Ibrahim Sobh, Victor Talpaert, Patrick Mannion, Ahmad~A
  Al~Sallab, Senthil Yogamani, and Patrick P{\'e}rez.
\newblock Deep reinforcement learning for autonomous driving: A survey.
\newblock \emph{IEEE Transactions on Intelligent Transportation Systems},
  23\penalty0 (6):\penalty0 4909--4926, 2021.

\bibitem[Ziegler et~al.(2019)Ziegler, Stiennon, Wu, Brown, Radford, Amodei,
  Christiano, and Irving]{ziegler2019fine}
Daniel~M Ziegler, Nisan Stiennon, Jeffrey Wu, Tom~B Brown, Alec Radford, Dario
  Amodei, Paul Christiano, and Geoffrey Irving.
\newblock Fine-tuning language models from human preferences.
\newblock \emph{arXiv preprint arXiv:1909.08593}, 2019.

\bibitem[Hessel et~al.(2019)Hessel, van Hasselt, Modayil, and
  Silver]{hessel2019inductive}
Matteo Hessel, Hado van Hasselt, Joseph Modayil, and David Silver.
\newblock On inductive biases in deep reinforcement learning.
\newblock \emph{arXiv preprint arXiv:1907.02908}, 2019.

\bibitem[Dayan(1992)]{dayan1992convergence}
Peter Dayan.
\newblock The convergence of td () for general.
\newblock \emph{Machine learning}, 8\penalty0 (3):\penalty0 341--362, 1992.

\bibitem[Watkins and Dayan(1992)]{watkins1992q}
Christopher~JCH Watkins and Peter Dayan.
\newblock Q-learning.
\newblock \emph{Machine learning}, 8:\penalty0 279--292, 1992.

\bibitem[Tsitsiklis and Vanroy(1997)]{tsitsiklis1997analysis}
JN~Tsitsiklis and B~Vanroy.
\newblock An analysis of temporal-difference learning with function
  approximation.
\newblock \emph{IEEE Transactions on Automatic Control}, 42\penalty0
  (5):\penalty0 674--690, 1997.

\bibitem[Gordon(2000)]{gordon2000reinforcement}
Geoffrey~J Gordon.
\newblock Reinforcement learning with function approximation converges to a
  region.
\newblock \emph{Advances in neural information processing systems}, 13, 2000.

\bibitem[Kearns and Singh(2002)]{kearns2002near}
Michael Kearns and Satinder Singh.
\newblock Near-optimal reinforcement learning in polynomial time.
\newblock \emph{Machine learning}, 49:\penalty0 209--232, 2002.

\bibitem[Auer et~al.(2008)Auer, Jaksch, and Ortner]{auer2008near}
Peter Auer, Thomas Jaksch, and Ronald Ortner.
\newblock Near-optimal regret bounds for reinforcement learning.
\newblock \emph{Advances in neural information processing systems}, 21, 2008.

\bibitem[Bhandari et~al.(2018)Bhandari, Russo, and Singal]{bhandari_TD_linear}
Jalaj Bhandari, Daniel Russo, and Raghav Singal.
\newblock A finite time analysis of temporal difference learning with linear
  function approximation, 2018.
\newblock URL \url{https://arxiv.org/abs/1806.02450}.

\bibitem[Dalal et~al.(2018)Dalal, Sz{\"o}r{\'e}nyi, Thoppe, and
  Mannor]{dalal2018finite}
Gal Dalal, Bal{\'a}zs Sz{\"o}r{\'e}nyi, Gugan Thoppe, and Shie Mannor.
\newblock Finite sample analyses for td (0) with function approximation.
\newblock In \emph{Proceedings of the AAAI Conference on Artificial
  Intelligence}, volume~32, 2018.

\bibitem[Lakshminarayanan and Szepesvari(2018)]{lakshminarayanan2017linear}
Chandrashekar Lakshminarayanan and Csaba Szepesvari.
\newblock Linear stochastic approximation: How far does constant step-size and
  iterate averaging go?
\newblock In Amos Storkey and Fernando Perez-Cruz, editors, \emph{Proceedings
  of the Twenty-First International Conference on Artificial Intelligence and
  Statistics}, volume~84 of \emph{Proceedings of Machine Learning Research},
  pages 1347--1355. PMLR, 09--11 Apr 2018.
\newblock URL \url{https://proceedings.mlr.press/v84/lakshminarayanan18a.html}.

\bibitem[Bellman(2010)]{bellman2010dynamic}
Richard~E Bellman.
\newblock \emph{Dynamic programming}.
\newblock Princeton university press, 2010.

\bibitem[Arulkumaran et~al.(2017)Arulkumaran, Deisenroth, Brundage, and
  Bharath]{arulkumaran2017deep}
Kai Arulkumaran, Marc~Peter Deisenroth, Miles Brundage, and Anil~Anthony
  Bharath.
\newblock Deep reinforcement learning: A brief survey.
\newblock \emph{IEEE Signal Processing Magazine}, 34\penalty0 (6):\penalty0
  26--38, 2017.

\bibitem[Sutton(1984)]{sutton1984temporal}
Richard~Stuart Sutton.
\newblock \emph{Temporal credit assignment in reinforcement learning}.
\newblock University of Massachusetts Amherst, 1984.

\bibitem[Martin et~al.(1973)Martin, Siggia, and Rose]{martin1973statistical}
Paul~Cecil Martin, ED~Siggia, and HA~Rose.
\newblock Statistical dynamics of classical systems.
\newblock \emph{Physical Review A}, 8\penalty0 (1):\penalty0 423, 1973.

\bibitem[Crisanti and Sompolinsky(2018)]{crisanti2018path}
A~Crisanti and H~Sompolinsky.
\newblock Path integral approach to random neural networks.
\newblock \emph{Physical Review E}, 98\penalty0 (6):\penalty0 062120, 2018.

\bibitem[Helias and Dahmen(2020)]{helias2020statistical}
Moritz Helias and David Dahmen.
\newblock \emph{Statistical field theory for neural networks}, volume 970.
\newblock Springer, 2020.

\bibitem[Bordelon and Pehlevan(2022{\natexlab{a}})]{bordelon2022self}
Blake Bordelon and Cengiz Pehlevan.
\newblock Self-consistent dynamical field theory of kernel evolution in wide
  neural networks.
\newblock \emph{arXiv preprint arXiv:2205.09653}, 2022{\natexlab{a}}.

\bibitem[Bordelon and Pehlevan(2022{\natexlab{b}})]{bordelon2022influence}
Blake Bordelon and Cengiz Pehlevan.
\newblock The influence of learning rule on representation dynamics in wide
  neural networks.
\newblock \emph{arXiv preprint arXiv:2210.02157}, 2022{\natexlab{b}}.

\bibitem[Bordelon et~al.(2020)Bordelon, Canatar, and
  Pehlevan]{bordelon2020spectrum}
Blake Bordelon, Abdulkadir Canatar, and Cengiz Pehlevan.
\newblock Spectrum dependent learning curves in kernel regression and wide
  neural networks.
\newblock In \emph{International Conference on Machine Learning}, pages
  1024--1034. PMLR, 2020.

\bibitem[Loureiro et~al.(2021)Loureiro, Gerbelot, Cui, Goldt, Krzakala, Mezard,
  and Zdeborov{\'a}]{loureiro2021learning}
Bruno Loureiro, Cedric Gerbelot, Hugo Cui, Sebastian Goldt, Florent Krzakala,
  Marc Mezard, and Lenka Zdeborov{\'a}.
\newblock Learning curves of generic features maps for realistic datasets with
  a teacher-student model.
\newblock \emph{Advances in Neural Information Processing Systems},
  34:\penalty0 18137--18151, 2021.

\bibitem[Hu and Lu(2022)]{hu2022universality}
Hong Hu and Yue~M Lu.
\newblock Universality laws for high-dimensional learning with random features.
\newblock \emph{IEEE Transactions on Information Theory}, 2022.

\bibitem[Bordelon and Pehlevan(2022{\natexlab{c}})]{bordelon2022sgd}
Blake Bordelon and Cengiz Pehlevan.
\newblock Learning curves for {SGD} on structured features.
\newblock In \emph{International Conference on Learning Representations},
  2022{\natexlab{c}}.
\newblock URL \url{https://openreview.net/forum?id=WPI2vbkAl3Q}.

\bibitem[Canatar et~al.(2021)Canatar, Bordelon, and Pehlevan]{canatar}
Abdulkadir Canatar, Blake Bordelon, and Cengiz Pehlevan.
\newblock Spectral bias and task-model alignment explain generalization in
  kernel regression and infinitely wide neural networks.
\newblock \emph{Nature communications}, 12\penalty0 (1):\penalty0 2914, 2021.

\bibitem[Sch{\"o}lkopf et~al.(2002)Sch{\"o}lkopf, Smola, Bach,
  et~al.]{scholkopf2002learning}
Bernhard Sch{\"o}lkopf, Alexander~J Smola, Francis Bach, et~al.
\newblock \emph{Learning with kernels: support vector machines, regularization,
  optimization, and beyond}.
\newblock MIT press, 2002.

\bibitem[Tagorti and Scherrer(2015)]{tagorti2015rate}
Manel Tagorti and Bruno Scherrer.
\newblock On the rate of convergence and error bounds for lstd ($\lambda$).
\newblock In \emph{International Conference on Machine Learning}, pages
  1521--1529. PMLR, 2015.

\bibitem[Pan et~al.(2017)Pan, White, and White]{pan2017accelerated}
Yangchen Pan, Adam White, and Martha White.
\newblock Accelerated gradient temporal difference learning.
\newblock In \emph{Proceedings of the AAAI Conference on Artificial
  Intelligence}, volume~31, 2017.

\bibitem[Geramifard et~al.(2006)Geramifard, Bowling, Zinkevich, and
  Sutton]{geramifard2006ilstd}
Alborz Geramifard, Michael Bowling, Martin Zinkevich, and Richard~S Sutton.
\newblock ilstd: Eligibility traces and convergence analysis.
\newblock \emph{Advances in Neural Information Processing Systems}, 19, 2006.

\bibitem[Pineda(1997)]{pineda1997mean}
Fernando~J Pineda.
\newblock Mean-field theory for batched td ($\lambda$).
\newblock \emph{Neural computation}, 9\penalty0 (7):\penalty0 1403--1419, 1997.

\bibitem[Patil et~al.(2023)Patil, Prashanth, Nagaraj, and
  Precup]{patil2023finite}
Gandharv Patil, LA~Prashanth, Dheeraj Nagaraj, and Doina Precup.
\newblock Finite time analysis of temporal difference learning with linear
  function approximation: Tail averaging and regularisation.
\newblock In \emph{International Conference on Artificial Intelligence and
  Statistics}, pages 5438--5448. PMLR, 2023.

\bibitem[Srikant and Ying(2019)]{srikant2019finite}
Rayadurgam Srikant and Lei Ying.
\newblock Finite-time error bounds for linear stochastic approximation andtd
  learning.
\newblock In \emph{Conference on Learning Theory}, pages 2803--2830. PMLR,
  2019.

\bibitem[Prashanth et~al.(2021)Prashanth, Korda, and
  Munos]{prashanth2021concentration}
LA~Prashanth, Nathaniel Korda, and R{\'e}mi Munos.
\newblock Concentration bounds for temporal difference learning with linear
  function approximation: the case of batch data and uniform sampling.
\newblock \emph{Machine Learning}, 110:\penalty0 559--618, 2021.

\bibitem[Berthier et~al.(2022)Berthier, Kobeissi, and
  Bach]{NEURIPS2022_32246544}
Elo\"{\i}se Berthier, Ziad Kobeissi, and Francis Bach.
\newblock A non-asymptotic analysis of non-parametric temporal-difference
  learning.
\newblock In S.~Koyejo, S.~Mohamed, A.~Agarwal, D.~Belgrave, K.~Cho, and A.~Oh,
  editors, \emph{Advances in Neural Information Processing Systems}, volume~35,
  pages 7599--7613. Curran Associates, Inc., 2022.
\newblock URL
  \url{https://proceedings.neurips.cc/paper_files/paper/2022/file/32246544c237164c365c0527b677a79a-Paper-Conference.pdf}.

\bibitem[Borkar and Meyn(2000)]{borkar2000ode}
Vivek~S Borkar and Sean~P Meyn.
\newblock The ode method for convergence of stochastic approximation and
  reinforcement learning.
\newblock \emph{SIAM Journal on Control and Optimization}, 38\penalty0
  (2):\penalty0 447--469, 2000.

\bibitem[Menache et~al.(2005)Menache, Mannor, and Shimkin]{menache2005basis}
Ishai Menache, Shie Mannor, and Nahum Shimkin.
\newblock Basis function adaptation in temporal difference reinforcement
  learning.
\newblock \emph{Annals of Operations Research}, 134\penalty0 (1):\penalty0
  215--238, 2005.

\bibitem[Mahadevan and Maggioni(2007)]{mahadevan2007proto}
Sridhar Mahadevan and Mauro Maggioni.
\newblock Proto-value functions: A laplacian framework for learning
  representation and control in markov decision processes.
\newblock \emph{Journal of Machine Learning Research}, 8\penalty0 (10), 2007.

\bibitem[Bellemare et~al.(2019)Bellemare, Dabney, Dadashi, Ali~Taiga, Castro,
  Le~Roux, Schuurmans, Lattimore, and Lyle]{bellemare2019geometric}
Marc Bellemare, Will Dabney, Robert Dadashi, Adrien Ali~Taiga, Pablo~Samuel
  Castro, Nicolas Le~Roux, Dale Schuurmans, Tor Lattimore, and Clare Lyle.
\newblock A geometric perspective on optimal representations for reinforcement
  learning.
\newblock \emph{Advances in neural information processing systems}, 32, 2019.

\bibitem[Lyle et~al.(2022)Lyle, Rowland, Dabney, Kwiatkowska, and
  Gal]{lyle_RL_dynamics}
Clare Lyle, Mark Rowland, Will Dabney, Marta Kwiatkowska, and Yarin Gal.
\newblock Learning dynamics and generalization in deep reinforcement learning.
\newblock In Kamalika Chaudhuri, Stefanie Jegelka, Le~Song, Csaba Szepesvari,
  Gang Niu, and Sivan Sabato, editors, \emph{Proceedings of the 39th
  International Conference on Machine Learning}, volume 162 of
  \emph{Proceedings of Machine Learning Research}, pages 14560--14581. PMLR,
  17--23 Jul 2022.
\newblock URL \url{https://proceedings.mlr.press/v162/lyle22a.html}.

\bibitem[Hu et~al.(2020)Hu, Wang, Jia, Wang, Chen, Hao, Wu, and
  Fan]{hu2020learning}
Yujing Hu, Weixun Wang, Hangtian Jia, Yixiang Wang, Yingfeng Chen, Jianye Hao,
  Feng Wu, and Changjie Fan.
\newblock Learning to utilize shaping rewards: A new approach of reward
  shaping.
\newblock \emph{Advances in Neural Information Processing Systems},
  33:\penalty0 15931--15941, 2020.

\bibitem[Zou et~al.(2021)Zou, Ren, Yan, Su, and Zhu]{zou2019reward}
Haosheng Zou, Tongzheng Ren, Dong Yan, Hang Su, and Jun Zhu.
\newblock Learning task-distribution reward shaping with meta-learning.
\newblock In \emph{Proceedings of the AAAI Conference on Artificial
  Intelligence}, volume~35, pages 11210--11218, 2021.

\bibitem[Amari(1998)]{amari1998natural}
Shun-Ichi Amari.
\newblock Natural gradient works efficiently in learning.
\newblock \emph{Neural computation}, 10\penalty0 (2):\penalty0 251--276, 1998.

\bibitem[Kakade(2001)]{kakade2001natural}
Sham~M Kakade.
\newblock A natural policy gradient.
\newblock \emph{Advances in neural information processing systems}, 14, 2001.

\bibitem[Seung et~al.(1992)Seung, Sompolinsky, and
  Tishby]{seung1992statistical}
Hyunjune~Sebastian Seung, Haim Sompolinsky, and Naftali Tishby.
\newblock Statistical mechanics of learning from examples.
\newblock \emph{Physical review A}, 45\penalty0 (8):\penalty0 6056, 1992.

\bibitem[Engel and Van~den Broeck(2001)]{engel2001statistical}
Andreas Engel and Christian Van~den Broeck.
\newblock \emph{Statistical mechanics of learning}.
\newblock Cambridge University Press, 2001.

\bibitem[Bahri et~al.(2020)Bahri, Kadmon, Pennington, Schoenholz,
  Sohl-Dickstein, and Ganguli]{bahri2020statistical}
Yasaman Bahri, Jonathan Kadmon, Jeffrey Pennington, Sam~S Schoenholz, Jascha
  Sohl-Dickstein, and Surya Ganguli.
\newblock Statistical mechanics of deep learning.
\newblock \emph{Annual Review of Condensed Matter Physics}, 11:\penalty0
  501--528, 2020.

\bibitem[Mignacco et~al.(2020)Mignacco, Krzakala, Urbani, and
  Zdeborov{\'a}]{mignacco2020dynamical}
Francesca Mignacco, Florent Krzakala, Pierfrancesco Urbani, and Lenka
  Zdeborov{\'a}.
\newblock Dynamical mean-field theory for stochastic gradient descent in
  gaussian mixture classification.
\newblock \emph{Advances in Neural Information Processing Systems},
  33:\penalty0 9540--9550, 2020.

\bibitem[Gerbelot et~al.(2022)Gerbelot, Troiani, Mignacco, Krzakala, and
  Zdeborova]{gerbelot2022rigorous}
Cedric Gerbelot, Emanuele Troiani, Francesca Mignacco, Florent Krzakala, and
  Lenka Zdeborova.
\newblock Rigorous dynamical mean field theory for stochastic gradient descent
  methods.
\newblock \emph{arXiv preprint arXiv:2210.06591}, 2022.

\bibitem[Celentano et~al.(2021)Celentano, Cheng, and
  Montanari]{celentano2021high}
Michael Celentano, Chen Cheng, and Andrea Montanari.
\newblock The high-dimensional asymptotics of first order methods with random
  data.
\newblock \emph{arXiv preprint arXiv:2112.07572}, 2021.

\bibitem[Simon et~al.(2023)Simon, Dickens, Karkada, and
  Deweese]{simon2022eigenlearning}
James~B Simon, Madeline Dickens, Dhruva Karkada, and Michael Deweese.
\newblock The eigenlearning framework: A conservation law perspective on kernel
  ridge regression and wide neural networks.
\newblock \emph{Transactions on Machine Learning Research}, 2023.
\newblock ISSN 2835-8856.
\newblock URL \url{https://openreview.net/forum?id=FDbQGCAViI}.

\bibitem[Refinetti et~al.(2023)Refinetti, Ingrosso, and
  Goldt]{refinetti2022neural}
Maria Refinetti, Alessandro Ingrosso, and Sebastian Goldt.
\newblock Neural networks trained with sgd learn distributions of increasing
  complexity.
\newblock In \emph{International Conference on Machine Learning}, pages
  28843--28863. PMLR, 2023.

\bibitem[Ingrosso and Goldt(2022)]{ingrosso2022data}
Alessandro Ingrosso and Sebastian Goldt.
\newblock Data-driven emergence of convolutional structure in neural networks.
\newblock \emph{Proceedings of the National Academy of Sciences}, 119\penalty0
  (40):\penalty0 e2201854119, 2022.

\bibitem[Varre et~al.(2021)Varre, Pillaud-Vivien, and
  Flammarion]{varre2021last}
Aditya~Vardhan Varre, Loucas Pillaud-Vivien, and Nicolas Flammarion.
\newblock Last iterate convergence of sgd for least-squares in the
  interpolation regime.
\newblock \emph{Advances in Neural Information Processing Systems},
  34:\penalty0 21581--21591, 2021.

\bibitem[Velikanov et~al.(2023)Velikanov, Kuznedelev, and
  Yarotsky]{velikanov2023view}
Maksim Velikanov, Denis Kuznedelev, and Dmitry Yarotsky.
\newblock A view of mini-batch {SGD} via generating functions: conditions of
  convergence, phase transitions, benefit from negative momenta.
\newblock In \emph{The Eleventh International Conference on Learning
  Representations}, 2023.
\newblock URL \url{https://openreview.net/forum?id=bzaPGEllsjE}.

\bibitem[O'Keefe(1976)]{o1976place}
John O'Keefe.
\newblock Place units in the hippocampus of the freely moving rat.
\newblock \emph{Experimental neurology}, 51\penalty0 (1):\penalty0 78--109,
  1976.

\bibitem[Moser et~al.(2008)Moser, Kropff, and Moser]{moser2008place}
Edvard~I Moser, Emilio Kropff, and May-Britt Moser.
\newblock Place cells, grid cells, and the brain's spatial representation
  system.
\newblock \emph{Annu. Rev. Neurosci.}, 31:\penalty0 69--89, 2008.

\bibitem[Jacobs(1988)]{jacobs1988increased}
Robert~A Jacobs.
\newblock Increased rates of convergence through learning rate adaptation.
\newblock \emph{Neural networks}, 1\penalty0 (4):\penalty0 295--307, 1988.

\bibitem[Dabney and Barto(2012)]{dabney2012adaptive}
William Dabney and Andrew Barto.
\newblock Adaptive step-size for online temporal difference learning.
\newblock In \emph{Proceedings of the AAAI Conference on Artificial
  Intelligence}, volume~26, pages 872--878, 2012.

\bibitem[Ng et~al.(1999)Ng, Harada, and Russell]{ng1999policy}
Andrew~Y Ng, Daishi Harada, and Stuart~J Russell.
\newblock Policy invariance under reward transformations: Theory and
  application to reward shaping.
\newblock In \emph{Proceedings of the Sixteenth International Conference on
  Machine Learning}, pages 278--287, 1999.

\bibitem[Skinner(1965)]{skinner1965science}
Burrhus~Frederic Skinner.
\newblock \emph{Science and human behavior}.
\newblock Number 92904. Simon and Schuster, 1965.

\bibitem[Gullapalli and Barto(1992)]{gullapalli1992shaping}
Vijaykumar Gullapalli and Andrew~G Barto.
\newblock Shaping as a method for accelerating reinforcement learning.
\newblock In \emph{Proceedings of the 1992 IEEE international symposium on
  intelligent control}, pages 554--559. IEEE, 1992.

\bibitem[Bengio et~al.(2009)Bengio, Louradour, Collobert, and
  Weston]{bengio2009curriculum}
Yoshua Bengio, J{\'e}r{\^o}me Louradour, Ronan Collobert, and Jason Weston.
\newblock Curriculum learning.
\newblock In \emph{Proceedings of the 26th annual international conference on
  machine learning}, pages 41--48, 2009.

\bibitem[Chizat et~al.(2019)Chizat, Oyallon, and Bach]{chizat2019lazy}
Lenaic Chizat, Edouard Oyallon, and Francis Bach.
\newblock On lazy training in differentiable programming.
\newblock \emph{Advances in neural information processing systems}, 32, 2019.

\bibitem[Jacot et~al.(2018)Jacot, Gabriel, and Hongler]{jacot}
Arthur Jacot, Franck Gabriel, and Clement Hongler.
\newblock Neural tangent kernel: Convergence and generalization in neural
  networks.
\newblock In S.~Bengio, H.~Wallach, H.~Larochelle, K.~Grauman, N.~Cesa-Bianchi,
  and R.~Garnett, editors, \emph{Advances in Neural Information Processing
  Systems}, volume~31, pages 8571--8580. Curran Associates, Inc., 2018.
\newblock URL
  \url{https://proceedings.neurips.cc/paper/2018/file/5a4be1fa34e62bb8a6ec6b91d2462f5a-Paper.pdf}.

\bibitem[Van~Hasselt et~al.(2018)Van~Hasselt, Doron, Strub, Hessel, Sonnerat,
  and Modayil]{van2018deep}
Hado Van~Hasselt, Yotam Doron, Florian Strub, Matteo Hessel, Nicolas Sonnerat,
  and Joseph Modayil.
\newblock Deep reinforcement learning and the deadly triad.
\newblock \emph{arXiv preprint arXiv:1812.02648}, 2018.

\bibitem[Levine et~al.(2020)Levine, Kumar, Tucker, and Fu]{levine2020offline}
Sergey Levine, Aviral Kumar, George Tucker, and Justin Fu.
\newblock Offline reinforcement learning: Tutorial, review, and perspectives on
  open problems.
\newblock \emph{arXiv preprint arXiv:2005.01643}, 2020.

\bibitem[Perdomo et~al.(2023)Perdomo, Krishnamurthy, Bartlett, and
  Kakade]{perdomo2022sharp}
Juan Perdomo, Akshay Krishnamurthy, Peter~L Bartlett, and Sham Kakade.
\newblock A sharp characterization of linear estimators for offline policy
  evaluation.
\newblock \emph{Journal of machine learning research}, 2023.

\bibitem[Konda and Tsitsiklis(1999)]{konda1999actor}
Vijay Konda and John Tsitsiklis.
\newblock Actor-critic algorithms.
\newblock \emph{Advances in neural information processing systems}, 12, 1999.

\bibitem[Mnih et~al.(2016)Mnih, Badia, Mirza, Graves, Lillicrap, Harley,
  Silver, and Kavukcuoglu]{mnih2016asynchronous}
Volodymyr Mnih, Adria~Puigdomenech Badia, Mehdi Mirza, Alex Graves, Timothy
  Lillicrap, Tim Harley, David Silver, and Koray Kavukcuoglu.
\newblock Asynchronous methods for deep reinforcement learning.
\newblock In \emph{International conference on machine learning}, pages
  1928--1937. PMLR, 2016.

\bibitem[Patel et~al.(2023)Patel, Lee, Mannelli, Goldt, and Saxe]{patel2023rl}
Nishil Patel, Sebastian Lee, Stefano~Sarao Mannelli, Sebastian Goldt, and
  Andrew~M Saxe.
\newblock The rl perceptron: Dynamics of policy learning in high dimensions.
\newblock In \emph{ICLR 2023 Workshop on Physics for Machine Learning}, 2023.

\bibitem[Konda and Tsitsiklis(2004)]{konda2004convergence}
Vijay~R Konda and John~N Tsitsiklis.
\newblock Convergence rate of linear two-time-scale stochastic approximation.
\newblock \emph{Annals of Applied Probability}, pages 796--819, 2004.

\bibitem[Schultz et~al.(1997)Schultz, Dayan, and Montague]{schultz1997neural}
Wolfram Schultz, Peter Dayan, and P~Read Montague.
\newblock A neural substrate of prediction and reward.
\newblock \emph{Science}, 275\penalty0 (5306):\penalty0 1593--1599, 1997.

\bibitem[Doya(2008)]{doya2008modulators}
Kenji Doya.
\newblock Modulators of decision making.
\newblock \emph{Nature neuroscience}, 11\penalty0 (4):\penalty0 410--416, 2008.

\bibitem[Behrens et~al.(2018)Behrens, Muller, Whittington, Mark, Baram,
  Stachenfeld, and Kurth-Nelson]{behrens2018cognitive}
Timothy~EJ Behrens, Timothy~H Muller, James~CR Whittington, Shirley Mark,
  Alon~B Baram, Kimberly~L Stachenfeld, and Zeb Kurth-Nelson.
\newblock What is a cognitive map? organizing knowledge for flexible behavior.
\newblock \emph{Neuron}, 100\penalty0 (2):\penalty0 490--509, 2018.

\bibitem[Stachenfeld et~al.(2017)Stachenfeld, Botvinick, and
  Gershman]{stachenfeld2017hippocampus}
Kimberly~L Stachenfeld, Matthew~M Botvinick, and Samuel~J Gershman.
\newblock The hippocampus as a predictive map.
\newblock \emph{Nature neuroscience}, 20\penalty0 (11):\penalty0 1643--1653,
  2017.

\bibitem[Sosa and Giocomo(2021)]{sosa2021navigating}
Marielena Sosa and Lisa~M Giocomo.
\newblock Navigating for reward.
\newblock \emph{Nature Reviews Neuroscience}, 22\penalty0 (8):\penalty0
  472--487, 2021.

\bibitem[McNamee et~al.(2021)McNamee, Stachenfeld, Botvinick, and
  Gershman]{mcnamee2021flexible}
Daniel~C McNamee, Kimberly~L Stachenfeld, Matthew~M Botvinick, and Samuel~J
  Gershman.
\newblock Flexible modulation of sequence generation in the
  entorhinal--hippocampal system.
\newblock \emph{Nature neuroscience}, 24\penalty0 (6):\penalty0 851--862, 2021.

\bibitem[Bordelon and Pehlevan(2022{\natexlab{d}})]{bordelon2022population}
Blake Bordelon and Cengiz Pehlevan.
\newblock Population codes enable learning from few examples by shaping
  inductive bias.
\newblock \emph{Elife}, 11:\penalty0 e78606, 2022{\natexlab{d}}.

\bibitem[Fedus et~al.(2020)Fedus, Ramachandran, Agarwal, Bengio, Larochelle,
  Rowland, and Dabney]{fedus2020revisiting}
William Fedus, Prajit Ramachandran, Rishabh Agarwal, Yoshua Bengio, Hugo
  Larochelle, Mark Rowland, and Will Dabney.
\newblock Revisiting fundamentals of experience replay.
\newblock In Hal~Daumé III and Aarti Singh, editors, \emph{Proceedings of the
  37th International Conference on Machine Learning}, volume 119 of
  \emph{Proceedings of Machine Learning Research}, pages 3061--3071. PMLR,
  13--18 Jul 2020.
\newblock URL \url{https://proceedings.mlr.press/v119/fedus20a.html}.

\end{thebibliography}

\clearpage 

\appendix 

\setcounter{page}{1}
\renewcommand*{\thepage}{S\arabic{page}}

\numberwithin{equation}{section}
\numberwithin{figure}{section}
\section*{Appendix}

\numberwithin{figure}{section}

\section{General Convergence Considerations for MDPs in Finite State Space}\label{app:convergence_considerations}

In this section, we will discuss the infinite batch limit and compare the value function obtained with TD to the ground truth value function. We will, for simplicity, consider in this section a Markov reward process with transition matrix $p(s_{t+1}=s'|s_t=s) = \Pi(s,s')$. The general theory described in the main text does not only apply to MDPs, but the convergence analysis for MDPs is much more straightforward so we describe it here. In this case, the ground truth value function satisfies
\begin{align}
    V(s) = R(s) + \gamma \sum_{s'} \Pi(s,s') V(s')
\end{align}
which gives the vector equation $\bm V = \left( \I - \gamma \bm\Pi \right)^{-1} \bm R$ for $\bm V, \bm R \in \mathbb{R}^{|\mathcal S|}$. Suppose the limiting distribution over states is $\bm p \in \mathbb{R}^{|\mathcal S|}$ which has entries $p(s) = \frac{1}{T} \sum_{t=1}^T p(s_t = s)$. The fixed point of TD dynamics is
\begin{align}
    \bm\Psi \text{diag}(\bm p) \bm\Psi^\top \w_{TD} = \bm\Psi \text{diag}(\bm p)  \bm R + \gamma \bm\Psi \text{diag}(\bm p) \bm\Pi \bm\Psi^\top \w_{TD} .
\end{align}
We now consider the two possible cases for this fixed point condition.

\paragraph{Case 1: Underparameterized Regime}
First, if the feature dimension $N$ is smaller than the size of the state space $|\mathcal S|$ and the features are maximal rank, then the TD learning fixed point is
\begin{align}
    \w_{TD} = \left( \bm\Psi \text{diag}(\bm p) \bm\Psi^\top - \gamma \bm\Psi \text{diag}(\bm p) \bm\Pi \bm\Psi^\top \right)^{-1}  \bm\Psi \text{diag}(\bm p) \bm R
\end{align}

In this case, the value function is not learned perfectly, as can be seen by computing $\hat{\bm V} = \bm\Psi^\top \w_{TD}$ and comparing to the ground truth $\bm V = \left( \bm I - \gamma \bm \Pi \right)^{-1} \bm R$. In this case, we would say that TD learning has an \textit{irreducible value error} due to capturing only a $N$ dimensional projection of the value function. 

\paragraph{Case 2: Overparameterized Regime}
Alternatively, if the feature dimension exceeds the total number of states, then the fixed point equation for TD is underspecified. However, throughout TD learning $\w_{TD} \in \text{span}\{ \bm\psi(s) \}_{s\in\mathcal S}$ so we can instead consider the decompostion $\w_V = \sum_{s} \alpha(s) \bm\psi(s)$, where $\bm\alpha \in \mathbb{R}^{|\mathcal S|}$ satisfies
\begin{align}
    \text{diag}(\bm p) (\I - \gamma \bm\Pi) \bm K \bm\alpha = \text{diag}(\bm p) \bm R
\end{align}
where $\bm K \in \mathbb{R}^{|\mathcal S| \times |\mathcal S|}$ is the kernel computed with features $K(s,s') = \bm\psi(s) \cdot \bm\psi(s')$. The solution to the above equation is unique and the learned value function $\hat{\bm V} = \bm\Psi^\top \w_{TD} = \bm K \bm K^{-1} \left( \bm I - \gamma \bm\Pi\right)^{-1} \bm R = \left( \bm I - \gamma \bm\Pi \right)^{-1} \bm R = \bm V$. Therefore, in the over-parameterized limit, the irreducible value error for TD learning is zero. This limit was considered dynamically in the infinite batch (vanishing SGD noise) setting by \cite{lyle_RL_dynamics}.

\section{Derivation of Learning Curves}\label{app:derivation_learn_curves}

In this section, we now consider the dynamics of TD learning when $B$ random episodes are sampled at a time. In this calculation, the finite batch of episodes leads to non-negligible SGD effects which can cause undesirable plateaus in TD dynamics. 

\subsection{Field Theory Derivation}

In this section we use a Gaussian field theory formalism to compute the learning curve in the high dimensional asymptotic limit $N,B\to\infty$ with $B/N = \alpha$. The episode length $T$ is treated as $\mathcal{O}(1)$. While this paper focuses on the online setting, where fresh trajectories $\{ \tau^\mu_n \}$ are sampled at each iteration $n$, this model can be straightforwardly extended to the case where a fixed number of experience trajectories $\{ \tau^\mu \}$ are replayed repeatedly during TD learning. We leave the experience replay dynamic mean field theory calculation for future work. The starting point of our analysis is tracking the moment generating function for the iterate dynamics
\begin{align}
    Z[\{\bm j_n\}] = \mathbb{E}_{\{\w_n \}, \{s^\mu_n(t)\}} \exp\left( i \sum_{n=0}^\infty \bm j_n \cdot \w_n \right).
\end{align}
To compute this object over random draws of training trajectories, we express the joint average over $\w_n, \{s^\mu_n(t) \}$ into conditional averages over $\w_n, \{ \Delta^\mu_n(t) \}|\{\bm\psi^\mu_n(t)\} $. To simplify the computation, in this section, we will compute the learning curve for mean zero features $\bm\mu(s) = 0$ and 
\begin{align}
    Z =& \mathbb{E}_{\{\bm\psi^\mu_n(t)\}} \int  \prod_{n} d\w_n \delta\left( \w_{n+1} - \w_{n} - \frac{\eta}{\sqrt{B} T} \sum_{\mu t} \Delta_n^\mu(t) \bm\psi^\mu_n(t) \right)\exp\left( i \sum_{n=0}^\infty \bm j_n \cdot \w_n \right) \nonumber
    \\
    &\times \int \prod_{t\mu n} d\Delta^\mu_n(t) \ \delta\left( \Delta^\mu_n(t) - \frac{1}{\sqrt N} (\w_R-\w_n) \cdot \bm\psi^\mu_n(t) - \frac{\gamma}{\sqrt N} \w_{n} \cdot \bm\psi^\mu_n(t+1) \right)
\end{align}
Expressing the Dirac-delta function as a Fourier integral $\delta(z) = \int \frac{d\hat{z}}{2\pi} \exp\left( i \hat{z} z \right)$ for each of our constraints. Under the \textit{Gaussian equivalence ansatz}, we can easily average over Gaussian $\bm\psi$ to obtain 
\begin{align}
    Z& = \int \mathcal D\Delta \mathcal D\hat{\Delta} \mathcal D\w \mathcal D\hat{\w} \exp\left( - \frac{\eta^2}{2 B T^2} \sum_{n \mu }  \sum_{tt'}  \Delta_n^\mu(t) \Delta_n^\mu(t') \hat{\w}_n^\top \bm\Sigma(t,t') \hat{\w}_n \right) \nonumber
    \\
    &\exp\left( i \sum_n \hat{\w}_n \cdot (\w_{n+1}-\w_n)  \right)\nonumber
    \\
    &\exp\left( - \frac{1}{2 N} \sum_{n\mu tt'} \left[ (\w_R-\w_n) \hat\Delta^\mu_n(t)  \right] \bm\Sigma(t,t')  \left[ (\w_R-\w_n) \hat\Delta^\mu_n(t')  \right] \right) \nonumber
    \\
    &\exp\left( -\frac{\gamma^2}{2N} \sum_{n\mu tt'} \hat{\Delta}^\mu_n(t-1) \hat{\Delta}^\mu_n(t'-1) \w_n^\top  \bm\Sigma(t,t')  \w_n  \right) \nonumber
    \\
    &\exp\left( - \frac{\gamma}{N} \sum_{n\mu t t'}\hat{\Delta}^\mu_n(t-1) \hat{\Delta}^\mu_n(t') \w_n^\top  \bm\Sigma(t,t') (\w_R - \w_n)   \right)
    \nonumber
    \\
    &\exp\left( -  \frac{\eta}{\sqrt{N B}  T} \sum_{n \mu tt'} \left[  \hat\Delta^\mu_n(t) (\w_R-\w_n) + \gamma  \hat{\Delta}^\mu_n(t-1) \w_n \right]^\top \bm\Sigma(t,t') \hat{\w}_n \Delta_n^\mu(t') \right) \nonumber
    \\
    &\exp\left(  i \sum_{n\mu t} \hat\Delta^\mu_n(t) \Delta^\mu_n(t) +    i \sum_n \bm j_n \cdot\w_n  \right)
\end{align}
where we adopted the shorthand $\mathcal D \Delta = \prod_{\mu, n, t} d\Delta^\mu_n(t)$ for the measure for the collection of variables $\{ \Delta^\mu_n(t) \}$. Likewise one should interpret $\mathcal D \w = \prod_{n} d\w_n$. To analyze the high dimensional limit of the above moment generating function, we introduce order parameters for the theory
\begin{align}
    Q_n(t,t') &= \frac{1}{B} \sum_{\mu=1}^B \Delta^\mu_n(t) \Delta^\mu_n(t') \ , \ C_n(t,t') = \frac{1}{N} \w_n^\top \bm\Sigma(t,t') \w_n \nonumber
 \\ 
 C_n^{R}(t,t') &= \frac{1}{N} \w_R \bm\Sigma(t,t') \w_n  
 \ ,  \ 
 D_n(t,t') = -  \frac{i}{N} \hat{\w}_n^\top \bm\Sigma(t,t') \w_n \ , \ D_n^R(t,t') =  -  \frac{i}{N} \hat{\w}_n^\top \bm\Sigma(t,t') \w_R 
\end{align}
For each of these order parameters, we enforce the definition of the order parameter using the Fourier representation of a Dirac-delta function 
\begin{align}
    1 &= B \int dQ_n(t,t') \delta\left( B Q_n(t,t') - \sum_\mu \Delta_n^\mu(t) \Delta_n^\mu(t') \right) \nonumber
    \\
    &= B \int \frac{dQ_n(t,t') d\hat{Q}_n(t,t')}{4\pi i } \exp\left( \frac{B}{2} \hat{Q}_n(t,t') Q_n(t,t') - \frac{1}{2} \sum_\mu \Delta^\mu_n(t) \Delta^\mu_n(t') \hat{Q}_n(t,t')  \right) .
\end{align}
Repeating this procedure for all order parameters $q = \{ Q, \hat{Q}, C, \hat{C}, C^R, \hat{C}^R, D, \hat{D}, D^R ,\hat{D}^R \}$ and disregarding irrelevant prefactors, we have the following formula for the moment generating function
\begin{align}
    Z \propto \int \mathcal D q \exp\left( \frac{N}{2} S[q] \right)
\end{align}
where the action $S$ has the form
\begin{align}
    S = \sum_{n} \sum_{tt'} &\left[ \alpha Q_n(t,t') \hat{Q}_n(t,t') + C_n(t,t') \hat{C}_n(t,t') +  C^R_n(t,t') \hat{C}^R_n(t,t') \right] \nonumber
    \\
    - 2 \sum_{n} \sum_{tt' } &\left[  D_n(t,t') \hat{D}_n(t,t') + D_n^R(t,t') \hat D_n^R(t,t')  \right] + \frac{2}{N} \ln \mathcal{Z}_w + 2 \alpha \ln \mathcal{Z}_\Delta \nonumber
    \\ 
    \mathcal Z_w = \int \mathcal{D} \w \mathcal{D} \hat{\w} &\exp\left( - \frac{\eta^2}{2 T^2} \sum_{n tt'} Q_n(t,t') \hat{\w}_n^\top \bm\Sigma(t,t') \hat{\w}_n + i \sum_{n} \hat{\w}_n \cdot (\w_{n+1}-\w_n)  \right) \nonumber
    \\
    &\exp\left( - \frac{1}{2} \sum_{n tt'} \hat{C}_n(t,t') \w_n^\top \bm\Sigma(t,t') \w_n - \frac{1}{2} \hat{C}_n^R(t,t') \w_R^\top \bm\Sigma(t,t') \w_n  \right) \nonumber
    \\
    &\exp\left( - i \sum_{n tt'} \hat{D}_n(t,t') \hat{\w}_n^\top \bm\Sigma(t,t') \w_n - i \sum_{n tt'} \hat{D}^R_n(t,t') \hat{\w}_n^\top \bm\Sigma(t,t') \w_R \right)  \nonumber
    \\
    \mathcal Z_\Delta = \int \mathcal D \Delta \mathcal D \hat\Delta &\exp\left( - \frac{1}{2} \sum_{n tt'} \hat{Q}_n(t,t') \Delta_n(t) \Delta_n(t')+ i \sum_{nt}\hat\Delta_n(t) \Delta_n(t)\right) \nonumber
    \\
    &\exp\left( - \frac{1}{2} \sum_{n tt'} \hat\Delta_n(t) \hat\Delta_n(t') \left[ \frac{1}{N} \w_R^\top \bm\Sigma(t,t') \w_R + C(t,t') \right] \right) \nonumber
    \\
    &\exp\left(  \frac{1}{2} \sum_{n tt'} \hat\Delta_n(t) \hat\Delta_n(t') \left[ C^R(t,t') + C^R(t',t)  \right] \right) \nonumber
    \\
    &\exp\left( - \gamma \sum_{t,t'} \hat\Delta_n(t) \hat\Delta_n(t'-1)  C_n^R(t,t')   \right)  \nonumber
    \\
    &\exp\left( - \frac{\gamma^2}{2} \sum_{t,t'}   \hat\Delta_n(t-1) \hat\Delta_n(t'-1) C_n(t,t') \right) \nonumber
    \\
    &\exp\left( - \frac{\eta i }{\sqrt{\alpha} T} \sum_{ n t,t'} \hat\Delta_n(t) \left[ D_n^R(t',t) - D_n(t',t) + \gamma D_n(t',t+1) \right] \Delta_n(t')   \right) 
\end{align}
The function $\mathcal Z$ has the interpretation of an effective partition function conditional on order parameters $q$. To study the $N \to \infty$ limit, we use the steepest descent method and analyze the saddle point $\frac{\partial S}{\partial q} = 0$. These saddle point equations give
\begin{align}
    \frac{\partial S}{\partial \hat Q_n(t,t') } &= \alpha Q_n(t,t') - \alpha \left< \Delta_n(t) \Delta_n(t') \right> = 0 \nonumber
    \\
    \frac{\partial S}{\partial  Q_n(t,t') } &= \alpha \hat Q_n(t,t') - \frac{\eta^2}{T^2 N} \left< \hat{\w}_n^\top \bm\Sigma(t,t') \hat{\w}_n \right> = 0 \nonumber
    \\
    \frac{\partial S}{\partial \hat C_n(t,t')} &= C_n(t,t') - \frac{1}{N} \left< \w_n^\top \bm\Sigma(t,t') \w_n  \right> = 0  \nonumber
    \\
    \frac{\partial S}{\partial  C_n(t,t')} &= \hat C_n(t,t') - \alpha \left< \hat\Delta_n(t) \hat\Delta_n(t') + \gamma^2 \hat\Delta_n(t-1) \hat\Delta_n(t'-1) \right> = 0  \nonumber
    \\
    \frac{\partial S}{\partial \hat C_n^R(t,t')} &= C_n^R(t,t') - \frac{1}{N} \left< \w_R^\top \bm\Sigma(t,t') \w_n  \right> = 0  \nonumber
    \\
    \frac{\partial S}{\partial  C_n(t,t')} &= \hat C_n(t,t') - \alpha \left< \hat\Delta_n(t) \hat\Delta_n(t') + \gamma \hat\Delta_n(t) \hat\Delta_n(t'-1) \right> = 0  \nonumber
    \\
    \frac{\partial S}{\partial \hat D_n(t,t')} &= - 2 D_n(t,t') -   \frac{2i }{N} \left< \hat{\w}_n^\top \bm\Sigma(t,t') \w_n \right> = 0 \nonumber
    \\
     \frac{\partial S}{\partial \hat D_n^R(t,t')} &= - 2 D_n^R(t,t') -   \frac{2i}{N} \left< \hat{\w}_n^\top \bm\Sigma(t,t') \w_n \right> = 0 \nonumber
     \\
     \frac{\partial S}{\partial D_n(t,t')} &= -2 \hat D_n(t,t') - \frac{2\alpha \eta i}{\sqrt{\alpha} T} \left< \gamma \hat\Delta_n(t-1) \Delta_n(t') - \hat\Delta_n(t) \Delta_n(t') \right> = 0 \nonumber
    \\
    \frac{\partial S}{\partial D_n^R(t,t')} &= -2 \hat D_n^R(t,t') - \frac{2\alpha \eta i}{\sqrt{\alpha} T} \left<  \hat\Delta_n(t) \Delta_n(t') \right> = 0
\end{align}
The brackets $\left< \right>$ denote averaging over the stochastic processes defined by moment generating functions $\mathcal{Z}_\Delta, \mathcal{Z}_w$. After these saddle point equations are solved the order parameters $q$ are treated as non-random and a Hubbard-Stratonovich transformation is employed. For example, 
\begin{align}
    &\exp\left( - \frac{1}{2} \hat{\w}_n \left[ \frac{\eta^2}{T^2} \sum_{tt'} Q_n(t,t') \bm\Sigma(t,t') \right] \hat{\w}_n  \right)
    = \mathbb{E}_{\bm u^w_n} \exp\left( i \sum_n  \bm u^w_n \cdot \hat{\w}_n \right)
\end{align}
where the average is over $\bm u^w_n \sim \mathcal{N}\left(0, \eta^2 T^{-2} \sum_{tt' } Q_n(t,t') \bm\Sigma(t,t') \right)$. After introducing these Hubbard fields $\u^w_n$ and $u^\Delta_n(t)$, we can perform the integrals over $\hat{\w}_n$ and $\hat\Delta_n(t)$ which collapse to Dirac-Delta functions. The resulting identities of the delta functions define the following stochastic processes on $\w_n$ and $u_n^\Delta$
\begin{align}
    \w_{n+1} &= \w_n + \u^w_n + \sum_{t t'} \hat{D}^R_n(t,t') \bm\Sigma(t,t') \w_R + \sum_{t,t'} \hat{D}_n(t,t') \bm\Sigma(t,t') \w_n \nonumber
    \\
    \Delta_n(t) &= u_n^\Delta(t) + \frac{\eta}{\sqrt{\alpha} T} \sum_{t t'} [ D_n^R(t,t') - D_n(t,t') - \gamma D_n(t',t+1) ] \Delta_n(t') .
\end{align}

Using a similar trick, we can show that for any observable depending on $\w_n$ or $\{\Delta_n(t)\}$ that
\begin{align}
    &- i \left< \hat{\w}_n O(\w_n)  \right> = \left< \frac{\partial}{\partial \u_n} O(\w_n) \right> \nonumber
    \\
    &- i \left<  \hat{\Delta}_n(t) O(\{ \Delta_n(t') \})  \right> = \left< \frac{\partial}{\partial u^\Delta_n(t)} O(\{ \Delta_n(t') \}) \right>
\end{align}
Since $\w_n$ is independent. This can be used to conclude
\begin{align}
    D_n(t,t') = 0 \ , \ D_n^R(t,t') = 0   
\end{align}
which implies that $\Delta_n(t) = u^\Delta_n(t)$. Consequently the response functions have trivial structure
\begin{align}
    \hat{D}_n(t) = - \frac{\eta  \sqrt{\alpha}}{ T} \left[ \delta(t-t') - \gamma \delta(t-1-t') \right] \ , \ \hat{D}^R_n(t,t') = \frac{\sqrt{\alpha} \eta}{T} \delta(t-t') .
\end{align}
We therefore obtain a stochastic process of the form 
\begin{align}
    \w_{n+1} &= \w_n + \u^w_n + \frac{\eta \sqrt{\alpha}}{T} \sum_{t} \bm\Sigma(t,t) \w_R - \frac{\eta \sqrt{\alpha}}{T} \sum_{t}  \left[ \bm\Sigma(t,t) - \gamma \bm\Sigma(t,t+1) \right] \w_n \nonumber
    \\
    \u_n &\sim \mathcal{N}\left(0, \frac{\eta^2}{T^2} \sum_{tt'} Q_n(t,t') \bm\Sigma(t,t') \right) \ , \ \{\Delta_n(t)\} \sim \mathcal{N}(0, \bm Q_n )  \nonumber
    \\
    Q_n(t,t') &= \left< \Delta_n(t) \Delta_n(t') \right> = \frac{1}{N} \w_R \bm\Sigma(t,t') \w_R - C^R(t,t') - C^R(t',t) + C(t,t') \nonumber
    \\
    C_n(t,t') &= \frac{1}{N} \left< \w_n^\top \bm\Sigma(t,t') \w_n \right> \nonumber
    \ , \  C_n^R(t,t') = \frac{1}{N} \left< \w_R^\top \bm\Sigma(t,t') \w_n \right>  
\end{align}
These are the final equations defining the stochastic evolution of $\w_n$ and $\Delta_n(t)$. 

\subsection{Simplifying the Saddle Point Equations}

Using the above saddle point equations, we see that the variables $\{ \Delta_n(t) \}$ and $\{ \w_n \}$ will be Gaussian random variables. It thus suffices to track their mean and covariance. The $\{ \Delta_n(t) \}$ variables have zero mean and covariance given by the $Q_n(t,t')$ function. The $\{ \w_n \}$ variables have the following mean evolution
\begin{align}
    \left< \w_{n+1} \right> &= \left<\w_n \right> + \eta \sqrt{\alpha}  \left[ \bar{\bm\Sigma} \w_R - \left[ \bar{\bm\Sigma} - \gamma \bar{\bm\Sigma}_+ \right] \left< \w_n \right> \right] \nonumber
    \\
    &= \left<\w_n \right> + \eta \sqrt{\alpha} \left[ \bar{\bm\Sigma} - \gamma \bar{\bm\Sigma}_+ \right] \left[ \w_{TD}  - \left< \w_n \right> \right]
\end{align}
where $\w_{TD} = \left[ \bar{\bm\Sigma} - \gamma \bar{\bm\Sigma}_+ \right]^{-1} \bar{\bm\Sigma} \w_R$ is the fixed point of the TD dynamics. We next compute $\M_n = \left< \left( \w_n - \w_{TD} \right) \left( \w_n - \w_{TD} \right)^\top \right>$ which admits the recursion
\begin{align}
    \M_{n+1} = \left( \I - \eta \sqrt{\alpha} \left[ \bar{\bm\Sigma} - \gamma \bar{\bm\Sigma}_+ \right] \right) \M_n \left( \I - \eta \sqrt{\alpha} \left[ \bar{\bm\Sigma} - \gamma \bar{\bm\Sigma}_+ \right] \right) + \frac{\eta^2}{T^2} \sum_{tt'} Q_{n}(t,t') \bm\Sigma(t,t')
\end{align}
To obtain our formulas which hold for finite batch size, we rescale the learning rate by $\eta \to \eta / \sqrt{\alpha}$ giving the following evolution
\begin{align}
    \left< \w_{n+1} \right> &= \left<\w_n \right> + \eta \left[ \bar{\bm\Sigma} - \gamma \bar{\bm\Sigma}_+ \right] \left[ \w_{TD}  - \left< \w_n \right> \right] \nonumber
    \\
    \M_{n+1} &= \left( \I - \eta   \left[ \bar{\bm\Sigma} - \gamma \bar{\bm\Sigma}_+ \right] \right) \M_n \left( \I - \eta \left[ \bar{\bm\Sigma} - \gamma \bar{\bm\Sigma}_+ \right] \right)^\top + \frac{\eta^2}{T^2 \alpha^2} \sum_{tt'} Q_{n}(t,t') \bm\Sigma(t,t')
\end{align}
After this rescaling, we see that the mean evolution for $\w_n$ is independent of $\alpha$ but that the variance picks up an additive term on each step on the order of $\mathcal{O}(\eta^2 \alpha^{-2})$ which vanishes in the infinite batch limit $B/N \to \infty$. The error for value learning can be obtained from $\M_n$ with $\mathcal L_n = \frac{1}{N} \text{Tr} \M_n \bar{\bm\Sigma}$. Lastly, we note that we can express the formula for $Q_n(t,t')$ entirely in terms of $\M_n$ and $\left< \w_n \right>$. This gives the lengthy expression 
\begin{align}\label{eq:Q_fn_M}
    Q_n(t,t')  &= \frac{1}{N} \left< (\w_R-\w_n)^\top \bm\Sigma(t,t') (\w_R-\w_n) \right> + \frac{\gamma }{N} \left<(\w_R-\w_n)^\top \bm\Sigma(t,t'+1)\w_n  \right> \nonumber
    \\
    &+ \frac{\gamma}{N} \left<\w_n^\top \bm\Sigma(t+1,t') (\w_R-\w_n) \right> + \frac{\gamma^2}{N} \left<\w_n^\top \bm\Sigma(t+1,t'+1) \w_n  \right> \nonumber
    \\
    &= \frac{1}{N} \text{Tr}\M_n \bm\Sigma(t,t') + \frac{1}{N} \left( \w_{TD} -  \left< \w_n \right> \right) \left[ \bm\Sigma(t,t') +\bm\Sigma(t',t) \right] \left( \w_{R} - \w_{TD} \right) \nonumber
    \\
    &+ \frac{1}{N}\left( \w_{R} - \w_{TD} \right)^\top \bm\Sigma(t,t')\left( \w_{R} - \w_{TD} \right) \nonumber
    \\
    &- \frac{\gamma}{N} \text{Tr} \M_n \left[ \bm\Sigma(t,t'+1) + \bm\Sigma(t+1,t') \right] \nonumber
    \\
    &+ \frac{\gamma}{N} \left(\w_{TD} - \left<\w_n \right> \right) \left[ \bm\Sigma(t,t'+1) + \bm\Sigma(t+1,t') \right] \w_{TD} \nonumber
    \\
    &+ \frac{\gamma}{N} (\w_R-\w_{TD})^\top \left[ \bm\Sigma(t,t'+1) + \bm\Sigma(t+1,t') \right] \left< \w_n \right> \nonumber
    \\
    &+ \frac{\gamma^2}{N} \text{Tr} \M_n \bm\Sigma(t+1,t'+1) + \frac{2\gamma^2}{N} \left( \left< \w_n \right> - \w_{TD} \right) \bm\Sigma(t+1,t'+1) \w_{TD} \nonumber
    \\
    &+ \frac{\gamma^2}{N} \w_{TD}^\top \bm\Sigma(t+1,t'+1) \w_{TD}
\end{align}

\subsection{Final Result}\label{app:final_result}

Below we state in compact form the full final result for our TD learning curves. The below equations give the evolution of the first and second moments of $\w_n$ obtained from the mean-field density of the previous section. Concretely, these moments obey dynamics
\begin{align}
    \left< \w_{n+1} \right> &= \left< {\w}_n \right> + \eta \left[ \bar{\bm\Sigma} - \gamma \bar{\bm\Sigma}_+ \right] \left[ \w_V - \left< \w_n \right> \right] \nonumber
    \\
    \M_{n+1} &= \left[ \I - \eta \bar{\bm\Sigma} +\eta\gamma  \bar{\bm\Sigma}_+ \right] \M_n \left[ \I - \eta \bar{\bm\Sigma} +\eta\gamma  \bar{\bm\Sigma}_+ \right]^\top  + \frac{\eta^2}{\alpha^2 T^2} \sum_{tt'} Q_n(t,t') \bm\Sigma(t,t') \nonumber
    \\
    Q_n(t,t') &= \frac{1}{N} \left< (\w_R-\w_n)^\top \bm\Sigma(t,t') (\w_R-\w_n) \right> + \frac{\gamma }{N} \left<(\w_R-\w_n)^\top \bm\Sigma(t,t'+1)\w_n  \right> \nonumber
    \\
    &+ \frac{\gamma}{N} \left<\w_n^\top \bm\Sigma(t+1,t') (\w_R-\w_n) \right> + \frac{\gamma^2}{N} \left<\w_n^\top \bm\Sigma(t+1,t'+1) \w_n  \right> . 
\end{align}
These equations can be solved iteratively for $\bar{\w}_n , \M_n, Q_n$. Finite dimensional versions of this result can be obtained by replacing $\alpha$ with $B/N$ as written in the main text. The value estimation error is
\begin{align}
    \mathcal{L}_n = \frac{1}{N} \text{Tr} \M_n \bar{\bm\Sigma}.
\end{align}

\subsection{Non-Zero Mean Feature}\label{app:nonzero_mean_features}

We can also simply modify the DMFT equations if the mean feature is nonvanishing $\bm\mu(s) \neq 0$. In this case, when averaging over all possible trajectories through state space, there is a mean feature vector at each episodic time $\bm\mu(t)$. The above equations are exact for non-zero mean features if $\bm\Sigma(t,t')$ is regarded as the (non-centered) correlation matrix $\left< \bm\psi(t) \bm\psi(t') \right>$. 

\subsection{Action Dependent Rewards}\label{app:action_dep_rewards}

\subsubsection{Expected Q-Learning Reduces to Previous Model}

In the case where we consider using features that depend on both states and actions $\bm\psi(s,a)$ then we can use expected value learning to identify the expected value of a state-action pair under policy $\pi$
\begin{align}
    V(s,a) = R(s,a) + \gamma \left< V(s',a') \right>_{s',a' | s,a}
\end{align}
This $V$ function quantifies the expected reward associated with taking action $a$ when in state $s$ and subsequently following policy $\pi$. This problem is structurally identical to the state dependent case by recognizing that state action pairs $(s,a)$ act as new states $\tilde{s}$. As before, the policy defines the probability distribution over transitions on $\tilde{s}$. We can thus use Equation \eqref{eq:lc1} to calculate the learning curve for this problem.

\subsubsection{Action Dependence Generates Target Noise in State Dependent Value Learning}

In the case where the rewards depend on both state and action $R(s,a)$ but features only depend on state $\bm\psi(s)$, we need a slight modification of our theory which models the reward at each state as a mean value (over actions) plus a noise. For each state, we decompose the reward function into mean and fluctuation
\begin{align}
    R(s,a) = \bar{R}(s) + \epsilon(s,a) \ , \ \bar{R}(s) = \mathbb{E}_{a \sim \pi(a|s)} R(s,a)
\end{align}
The function $\bar{R}(s)$ can again be decomposed into the basis of features $\bm\psi(s)$. However, we need to consider the correlation structure of $\epsilon(s,a)$. 
\begin{align}
    \mathbb{E}_{\tau} \epsilon(s_t,a_t) \epsilon(s_{t'},a_{t'}) &= \mathbb{E}_{s_t} \mathbb{E}_{a_t | s_t} \epsilon(s_t,a_t)  \left[ \mathbb{E}_{s_{t'}|s_t,a_t} \left(\mathbb{E}_{a_{t'}|s_{t'}} \epsilon(s_{t'},a_{t'}) \right) \right] \nonumber
    \\
    &= \delta_{t,t'} \text{Var}_{a|s_t} R(s_t,a) .
\end{align}
The above average vanishes for $t \neq t'$ since $\epsilon(s_{t'},a_{t'})$ is zero mean over $a|s$. We introduce the notation $\sigma^2_t = \text{Var}_{a|s_t} R(s_t,a)$. Thus, we effectively have a model where our TD errors obey
\begin{align}
    \Delta(t) = \bar{R}(s_t) + \epsilon(s_t,a_t) + \gamma \hat{V}(s_t) - \hat{V}(s_t)
\end{align}
The addition of this term leads to a simple modification of our $Q(t,t)$ function 
\begin{align}
    Q_n(t,t') &= \frac{1}{N} \left< (\w_R-\w_n)^\top \bm\Sigma(t,t') (\w_R-\w_n) \right> + \frac{\gamma }{N} \left<(\w_R-\w_n)^\top \bm\Sigma(t,t'+1)\w_n  \right> \nonumber
    \\
    &+ \frac{\gamma}{N} \left<\w_n^\top \bm\Sigma(t+1,t') (\w_R-\w_n) \right> + \frac{\gamma^2}{N} \left<\w_n^\top \bm\Sigma(t+1,t'+1) \w_n  \right> \nonumber
    \\
    &+ \delta_{t,t'} \sigma^2_t.
\end{align}
This change to the $Q_n(t,t')$ correlation function alters the dynamics of $\M_n$. Lastly, our population risk for the value estimation takes the form
\begin{align}
    \mathcal{L}_n = \frac{1}{N}\text{Tr} \M_n \bar{\bm\Sigma} + \frac{1}{T} \sum_{t} \sigma^2_t
\end{align}
where $\frac{1}{T} \sum_{t} \sigma^2_t$ exactly quantifies the variance in rewards unexplained by state-dependent features.

\subsection{Tracking Iterate Moments with Direct Recurrence Relation}\label{app:direct_iterate_averaging}

In this section we give a direct calculation of the first two moments of $\w$ over the collection of randomly sampled features $\{ \bm\psi^\mu_n(t)\}$ and show which terms are disregarded in the proportional limit examined in the main text. 

Letting $\bm A = \bar{\bm\Sigma} - \gamma \bar{\bm\Sigma}_+$, we note that the average evolution of $\w$ has the form
\begin{align}
    \left< \w_{n+1} \right> = \left( \bm\Sigma - \gamma \bm\Sigma_+ \right) \left( \w_{TD} - \left< \w_n \right> \right)
\end{align}
Thus, if we disregarded fluctuations in $\w_n$ due to SGD, the model will converge to the correct fixed point. Next, we look at $\bm M_n = \left< \left( \w_n - \w_{TD} \right) \left( \w_n - \w_{TD} \right) \right>$. Under the Gaussian equivalence ansatz, we have 
\begin{align}
    \M_{n+1} &= \M_n - \eta \A \M_n - \eta \M_n \A^\top + \frac{\eta^2}{T^2 B^2} \sum_{\mu\nu t t'} \left< \Delta^\mu_n(t) \Delta^\nu_n(t') \bm\psi^\mu_n(t) \bm\psi^\nu_n(t') \right> \nonumber
    \\
    &= (\I -\eta \A) \M_n(\I-\eta \A)^\top - \frac{\eta^2}{B} \A \M_n \A^\top + \frac{\eta^2}{T^2 B} \sum_{tt'} \left< \Delta_n(t) \Delta_n(t') \bm\psi(t) \bm\psi(t')^\top \right> \nonumber
    \\
    &= (\I -\eta \A) \M_n(\I-\eta \A)^\top + \frac{\eta^2}{T^2 B} \sum_{tt'} Q_n(t,t') \bm\Sigma(t,t')  \nonumber
    \\
    &+ \frac{\eta^2}{T^2 B} \sum_{tt'} \left< \Delta_n(t') \bm\psi(t) \right> \left< \Delta_n(t) \bm\psi(t')^\top \right> 
\end{align}
The mean field theory derived from saddle point integration consists of the first two terms in the final expression. Therefore mean field theory disregards the last term which computes cross time correlations of RPEs with features, effectively making the approximation
\begin{align}
    \frac{\eta^2}{T^2 B} \sum_{tt'} \left< \Delta_n(t') \bm\psi(t) \right> \left< \Delta_n(t) \bm\psi(t')^\top \right> \approx 0 .
\end{align}
After making this approximation, we recover the learning curve obtained in the previous Section \ref{app:final_result}. We show in our experiments that dropping this term does not significantly alter the learning curves. 

\subsection{Scaling of Asymptotic Fixed Points}\label{app:scaling_fixedpt}

To identify fixed points in the value error dynamics, we can seek non-vanishing fixed points for the weight error covariance $\M = \left< ({\w} - \w_{TD}) ({\w}-\w_{TD}) \right>$. We note that $\left< \w \right> \sim \w_{TD}$ asymptotically. Again, letting $\A = \bar{\bm\Sigma} - \gamma \bar{\bm\Sigma}_+$, we obtain the following fixed point condition for $\M$ under these assumptions
\begin{align}
     \A \M + & \M \A^\top - \eta \A \M \A^\top = \frac{\eta}{B T^2} \sum_{tt'} Q(t,t') \bm\Sigma(t,t') \nonumber
    \\
    Q(t,t') &= \text{Tr}\M \bm\Sigma(t,t') - \gamma \text{Tr} \M \left[ \bm\Sigma(t,t'+1) + \bm\Sigma(t+1,t') \right] + \gamma^2 \text{Tr} \M \bm\Sigma(t+1,t'+1) \nonumber
    \\
    &+ \gamma^2 \w_{TD}^\top \bar{\bm\Sigma}^{-1} \bar{\bm\Sigma}_+ \bm\Sigma(t,t') \bar{\bm\Sigma}_+ \bar{\bm\Sigma}^{-1} \w_{TD}  + \gamma^2 \w_{TD}^\top \bm\Sigma(t+1,t'+1) \w_{TD} \nonumber
    \\
    &+ \gamma^2 \w_{TD} \bar{\bm\Sigma}^{-1} \bar{\bm\Sigma}_+ \left[ \bm\Sigma(t,t'+1) + \bm\Sigma(t+1,t') \right] \w_{TD} .
\end{align}
Where we used the formula for $Q_n(t,t')$ from Appendix \ref{app:direct_iterate_averaging}, evaluated at $\left< \w \right> = \w_{TD}$ and used the fact that $\w_R = \w_{TD} - \gamma \bar{\bm\Sigma}^{-1} \bar{\bm\Sigma}_+ \w_{TD}$. 
The solution $\M = 0$ is a valid fixed point for $\M$ in the $\eta \to 0$ and $B \to \infty$ limits because the constant terms on the right-hand side vanish. Similarly, if $\gamma = 0$ (which corresponds to the standard supervised learning case), the right hand side is linear in $\M$, allowing $\M = 0$ to be a valid fixed point. 

However, for finite $B$ and non-zero $\eta$ and $\gamma$, there exists a solution to the above fixed point equation. For small $\frac{\eta \gamma^2}{B}$, we can deduce that $\M$ must satisfy a self-consistent asymptotic scaling of the form
\begin{align}
    \M = \mathcal{O}\left( \frac{\eta \gamma^2}{B} \right)
\end{align}
implying an asymptotic value error scaling of $\mathcal{L} \sim \text{Tr} \M \bar{\bm\Sigma} \sim \mathcal{O}\left( \frac{\eta \gamma^2}{B} \right)$. These scalings are examined in Figure \ref{fig:plateaus_annealing} where experiments obey the expected behavior.

\section{Reward Shaping}\label{app:reward_shaping}

In this section, we consider the role of reward shaping on the dynamics of TD learning. As discussed in the main text, we consider potential based shaping with potential function decomposable in the features $\phi(s) = \w_\phi \cdot \bm\psi(s)$. We first describe the change to the average weight evolution $\left< \w_n \right>$ and then describe the dynamics of the correlations. In potential based shaping, the TD errors take the form
\begin{align}
    \Delta(t) = R(s(t)) + \phi(s(t)) - \gamma \phi(s(t+1)) + \gamma \hat{V}(s(t+1)) - \hat{V}(s(t))
\end{align}
Computing from the DMFT equations the evolution of $\left< \w_n \right>$ we have
\begin{align}
    \left< \w_{n+1} \right> &= \left< \w_n \right> + \eta \bar{\bm\Sigma} (\w_R + \w_\phi - \left< \w_n \right> )  + \gamma \eta \bar{\bm\Sigma}_+ ( \left<\w_n\right> - \w_\phi ) \nonumber
    \\
    &= \left<\w_n\right> - \eta \A \left[ \w_{TD} + \w_\phi  - \left<\w_n\right>  \right] .
\end{align}
We see that including the reward shaping function $\phi$ offsets the fixed point of the algorithm to be $\w_{TD} + \w_\phi$. This occurs precisely because the potential-based reward shaping generates an additive correction to the target value function by $\phi(s)$ \cite{ng1999policy}. When we predict value at evaluation, we use the reshifted value $\hat V(s) - \phi(s)$. The natural quantity to track at the level of the mean field equations is the adapted version of $\M_n$
\begin{align}
    \M_n = \left< \left( \w_n - \w_{TD} - \w_\phi \right) \left( \w_n - \w_{TD} - \w_\phi \right)^\top \right> .
\end{align}
This correlation matrix has dynamics
\begin{align}
    \M_{n+1} = \left( \I - \eta \A \right) \M_n \left( \I - \eta \A \right)^\top + \frac{\eta^2}{B T^2} \sum_{tt'} Q_n(t,t') \bm\Sigma(t,t')    
\end{align}
and the TD-error correlations $Q_n(t,t')$ have the form
\begin{align}
    Q_n(t,t') =& \left< (\w_R + \w_\phi - \w_n)^\top \bm\Sigma(t,t') (\w_R + \w_\phi - \w_n) \right> \nonumber
    \\
    &+ \gamma \left< (\w - \w_\phi)^\top \left[ \bm\Sigma(t,t') + \bm\Sigma(t',t) \right](\w_R + \w_\phi - \w_n) \right> \nonumber
    \\
    &+\gamma^2  \left< (\w_n - \w_\phi )^\top \bm\Sigma(t+1,t'+1) (\w_n - \w_\phi) \right>
\end{align}
The value estimation error is again $\mathcal{L}_n = \text{Tr} \M_n \bar{\bm\Sigma}$. We see that the two primary ways that reward shaping alters the loss dynamics is
\begin{itemize}
    \item A change in the initial condition for $\M_n$ to be $\M_0 = (\w_{TD} + \w_\phi)(\w_{TD} + \w_\phi)^\top$
    \item A change in the TD error covariance term $Q_n(t,t')$
\end{itemize}
Both effects can generate significant changes in the dynamics and plateaus of the model.

\section{Non-Gaussian Features}\label{app:nongauss_full_th}

The full non-asymptotic theory (no assumptions on $N,B$ large) of TD dynamics with linear function approximation closes under the fourth moments of the features. In this setting we do not incorporate explicit factors of $N$ in the defintion of the value estimator $\hat{V}(t) = \w \cdot \bm\psi(t)$. As before, we track the update to the $\M$ matrix
\begin{align}
    \M_{n+1} &= \M_n - \eta \A \M_n - \eta \M_n \A^\top + \frac{\eta^2 (B - 1)}{B} \A \M_n \A^\top \nonumber
    \\
    &+ \frac{\eta^2}{B} \sum_{tt'} \left< \Delta_n(t) \Delta_n(t') \bm\psi(t) \bm\psi(t')^\top  \right>
\end{align}
To calculate the last term, we introduce the following tensor of fourth moments
\begin{align}
    \kappa^4_{i j k l}(t_1,t_2,t_3,t_4) =& \left< \psi_i(t_1) \psi_j(t_2) \psi_k(t_3) \psi_l(t_4) \right> 
\end{align}
In the Gaussian case, this reduces to an expression involving only the correlations. For example, if the features are zero mean, then we can use Wick's theorem to obtain the decomposition
\begin{align}
    \kappa^{4,Gauss}_{i j k l}(t_1,t_2,t_3,t_4) = \Sigma_{ij}(t_1,t_2) \Sigma_{kl}(t_3,t_4) + \Sigma_{ik}(t_1,t_3) \Sigma_{jl}(t_2,t_4) + \Sigma_{ij}(t_1,t_2) \Sigma_{kl}(t_3,t_4)
\end{align}
Now, using the fact that $\Delta_n(t) = (\w_R - \w_n) \cdot \bm\psi(t) + \gamma \w_n \cdot \bm\psi(t+1)$, we find 
\begin{align}
    &\left< \Delta_n(t) \Delta_n(t') \psi_i(t) \psi_j(t')^\top  \right>  \nonumber
    \\
    &= \left< \psi_i(t) \psi_j(t') \  \bm\psi(t)^\top (\w_R - \w_n) (\w_R-\w_n)^\top \bm\psi(t') \right>  \nonumber
    \\
    &+ \gamma \left< \psi_i(t) \psi_j(t') \  \bm\psi(t)^\top (\w_R - \w_n) \w_n^\top \bm\psi(t'+1) \right>  \nonumber
    \\
    &+\gamma \left< \psi_i(t) \psi_j(t') \  \bm\psi(t+1)^\top \w_n (\w_R - \w_n)^\top \bm\psi(t') \right>  \nonumber
    \\
    &+\gamma^2 \left< \psi_i(t) \psi_j(t') \  \bm\psi(t+1)^\top \w_n \w_n^\top \bm\psi(t'+1) \right>
\end{align}
Putting this all together, we find the following recurrence for $\M_n$ in the non-Gaussian case
\begin{align}
    \M_{n+1} &= \M_n - \eta \A \M_n - \eta \M_n \A^\top + \frac{\eta^2 (B - 1)}{B} \A \M_n \A^\top \nonumber
    \\
    &+ \frac{\eta^2}{B T^2} \sum_{t,t'} \text{Tr} \bm\kappa(t,t',t,t') \left< (\w_R -\w_n)(\w_R-\w_n)^\top \right> \nonumber
    \\
    &+ \frac{\gamma \eta^2 T^2}{B}\sum_{t,t'} \text{Tr} \ \bm\kappa(t,t',t,t'+1) \left< \w_n (\w_R -\w_n)^\top \right> \nonumber
    \\
    &+ \frac{\gamma \eta^2}{B T^2}\sum_{t,t'} \text{Tr} \  \bm\kappa(t,t',t+1,t') \left<  (\w_R -\w_n) \w_n^\top \right> \nonumber
    \\
    &+ \frac{\gamma^2 \eta^2}{B T^2} \sum_{t,t'} \text{Tr} \ \bm\kappa(t,t',t+1,t'+1) \left<  \w_n \w_n^\top \right>
\end{align}
where all traces are taken against the last two time and feature indices. The averages over the weights close in terms of the average $\left< \w_n \right>$ and the covariance $\M_n$. This equation is exact for any feature distribution, but requires significant computational resources to evaluate and is less illuminating than the mean field limit analyzed in the previous sections. 

\subsection{Breakdown of Gaussian Theory in Low Dimension}\label{app:gauss_breakdown}

In this section, we discuss the breakdown of Gaussian theory at low dimension $N$. In Figure \ref{fig:nongauss_vs_gauss_td_lowdim} we provide an example where the non-Gaussian distributions exhibit noticeably different learning curves than the Gaussian approximate theory (dashed black) and Gaussian samples with matching covariance (dashed color). We use the features 

\begin{figure}[h]
    \centering
    \subfigure[$B=5$]{\includegraphics[width=0.4\linewidth]{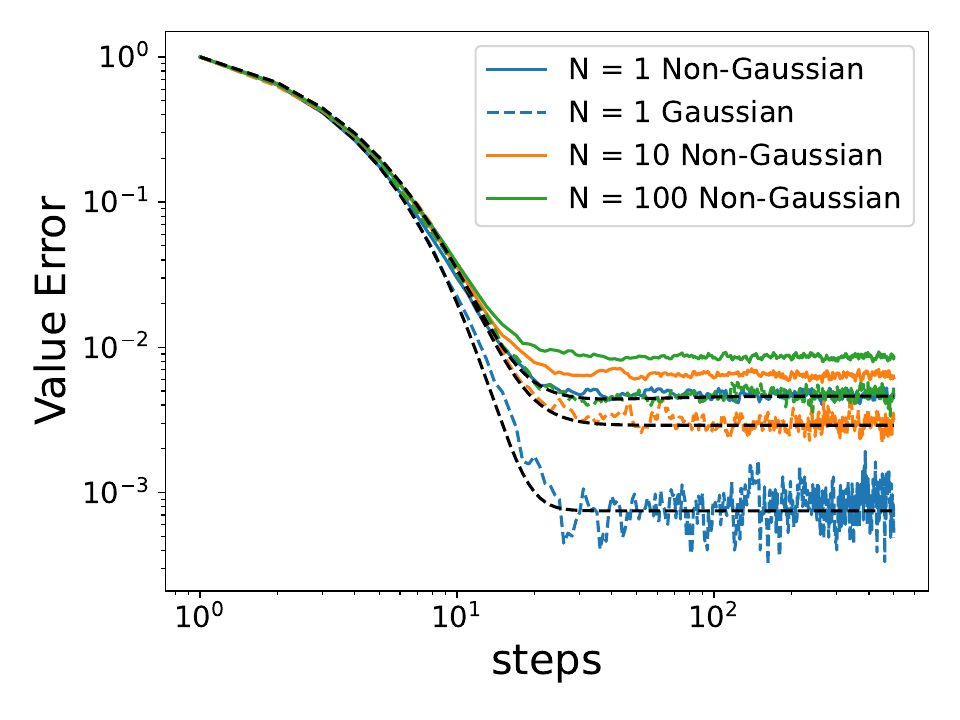}}
    \subfigure[$N = 100$]{\includegraphics[width=0.4\linewidth]{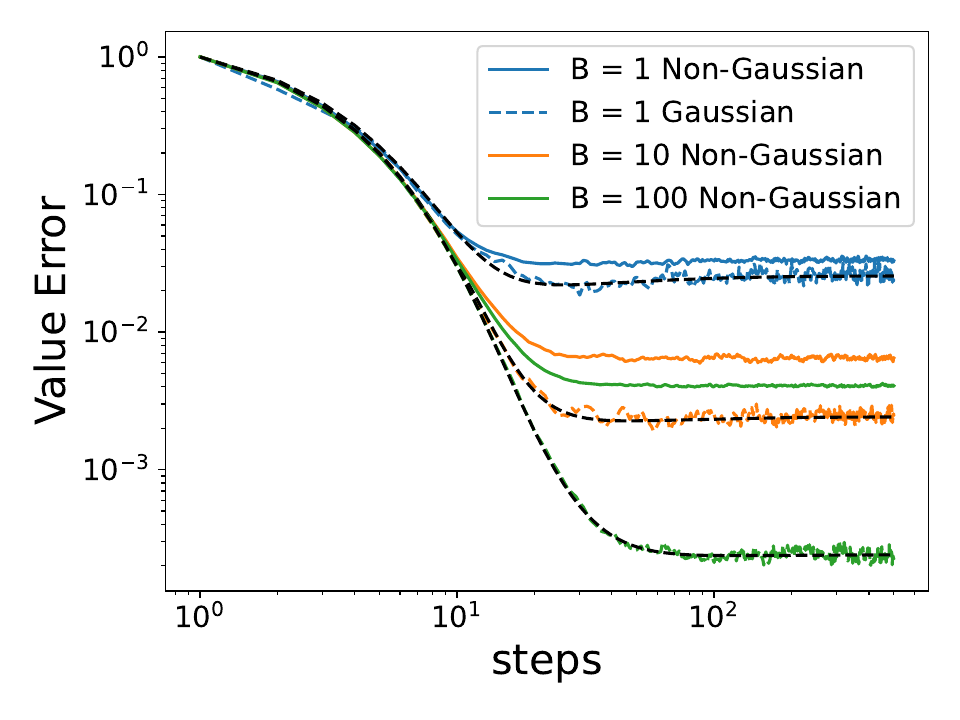}}
    \caption{The Gaussian theory can break down for non-Gaussian features in low dimension $N$. Illustration of the possible gap between Gaussian and non-Gaussian performance in the power law features of Figure \ref{fig:plateaus_annealing} and defined in Appendix \ref{app:numerical}. (a) The learning curves for batchsize $B=5$ and varying dimension $N$. As $N$ increases the gap between the non-Gaussian experiment (solid) and the Gaussian theory (black dashed) decreases.   }
    \label{fig:nongauss_vs_gauss_td_lowdim}
\end{figure}

\subsection{A Simple Solveable Example}

We next examine a very simple case where we can exactly characterize the gap between the non-Gaussian and Gaussian distributions. In this section, we examine the special case of $T=1$ and look at features which are independent $p(\bm\psi) = \prod_{i=1}^N p(\psi_i)$ (form a factor distribution). In this case, we obtain the following exact learning curve
\begin{align}
    \mathcal{L}_n = \left[ (1-\eta)^2 + \frac{\eta^2(N+1+\kappa)}{B} \right]^n \ , \ \kappa = \left< \psi^4 \right> - 3 \left< \psi^2 \right>^2
\end{align}
which holds for any $N,B$. We note that in the limit where $N,B \to \infty$ with $B/N = \alpha$, we see that the dependence on $\kappa$ disappears and we arrive at the universal behavior
\begin{align}
    \mathcal L_n \sim \left[ (1-\eta)^2 + \frac{\eta^2 }{ \alpha }  \right]^n   \ , \ N,B \to \infty 
\end{align}
For example, we can consider vectors on the hypercube where $\psi_k \in \{ \pm 1 \}$ with equal probability for $k \in \{1,...,N\}$ for the non-Gaussian distribution and compare to the Gaussian with identical covariance $\bm\psi \sim \mathcal{N}(0,\I)$. 
\begin{align}
    \mathcal{L}_n = \begin{cases}
        \left[ (1-\eta)^2 + \frac{\eta^2 (N+1)}{B} \right]^n &  \text{Gaussian} \ \bm\psi
        \\
        \left[ (1-\eta)^2 + \frac{\eta^2 (N-1)}{B} \right]^n & \text{Hypercube} \ \bm\psi
    \end{cases}
\end{align}
The reason for the discrepancy between the Gaussian and Bernoulli/Hypercube loss curves is exactly the negative kurtosis of the hypercube features
\begin{align}
    &\kappa_{\text{Gauss}} = 0 \nonumber
    \\
    &\kappa_{\text{Bernoulli}} - 3 \Sigma^2_{\text{Bernoulli}} = \left< \psi^4 \right> - 3 \left< \psi^2 \right>^2 = - 2
\end{align}
An example of this result for low and high dimensions $N$ with $B = \alpha N$ with $\alpha = 0.1$ is provided in Figure \ref{fig:sgd_low_vs_high}. In low dimension ($N=10$) the Bernoulli/Hypercube feature have noticeably different dynamics than the Gaussian features.  In the proportional limit $N,B \to \infty$ with $\alpha = B/N$ these learning curves are identical and all have the form $\mathcal L_n \sim \left[ (1-\eta)^2 + \frac{\eta^2}{\alpha} \right]^n $.

\begin{figure}[h]
    \centering
    \subfigure[Low Dimension]{\includegraphics[width=0.45\linewidth]{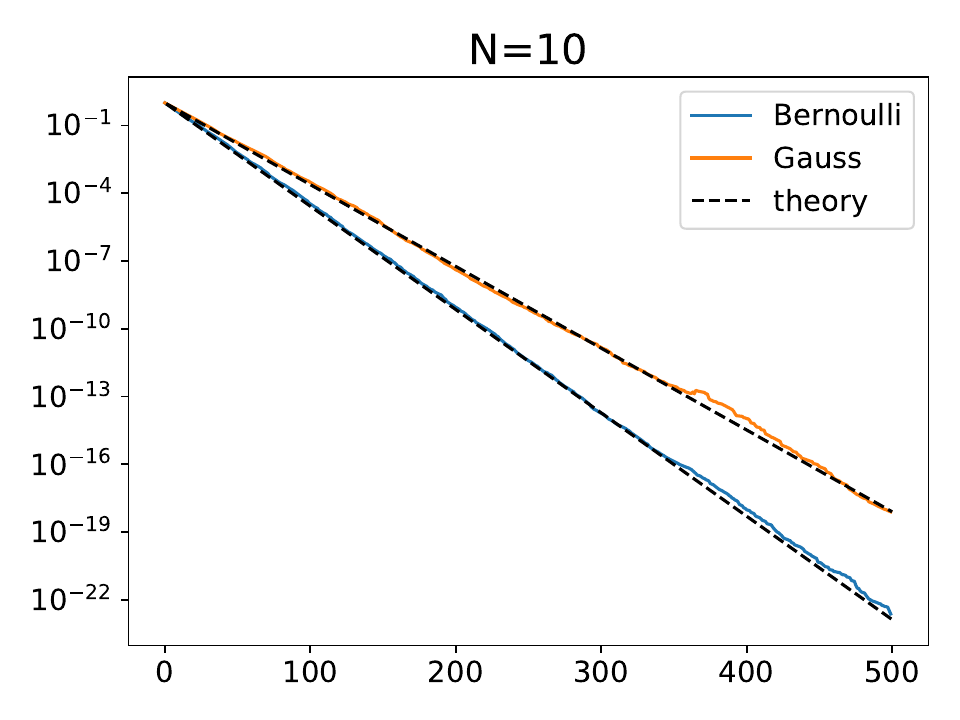}}
    \subfigure[High Dimension]{\includegraphics[width=0.45\linewidth]{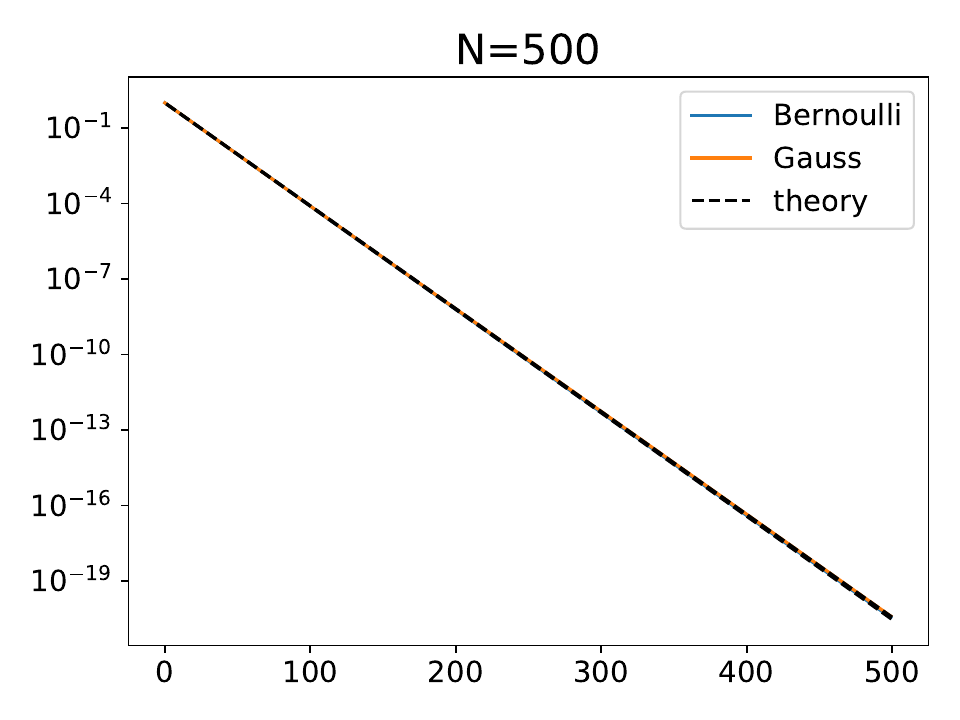}}
    \caption{A simple explicitly solveable case shows how non-Gaussian corrections appear at finite size but disappear in the proportional limit where $N,B \to \infty$ with $\alpha = B/N = \mathcal{O}(1)$.  }
    \label{fig:sgd_low_vs_high}
\end{figure}

\section{Tests on Other Feature Distributions}\label{app:other_features}

In this section, we include additional tests of our theory on alternative features with the same random walk policy as in Figure \ref{fig:gauss_equiv_phenomenon}. 
\begin{figure}[h]
    \centering
\subfigure[Polynomial features]{\includegraphics[width=0.425\linewidth]{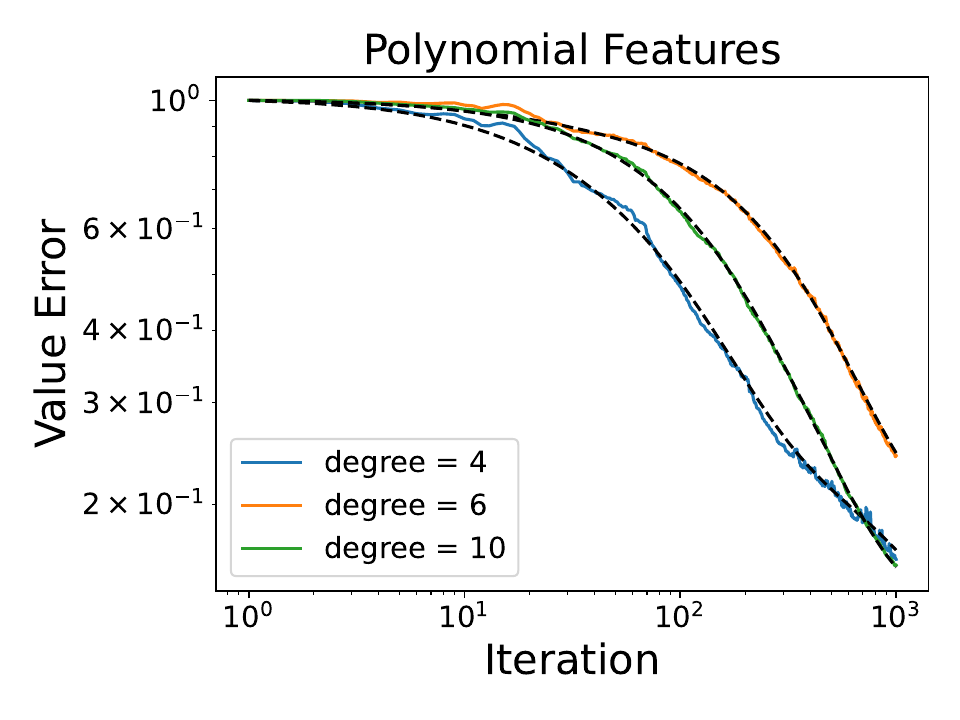}}
    \subfigure[Fourier Features]{\includegraphics[width=0.425\linewidth]{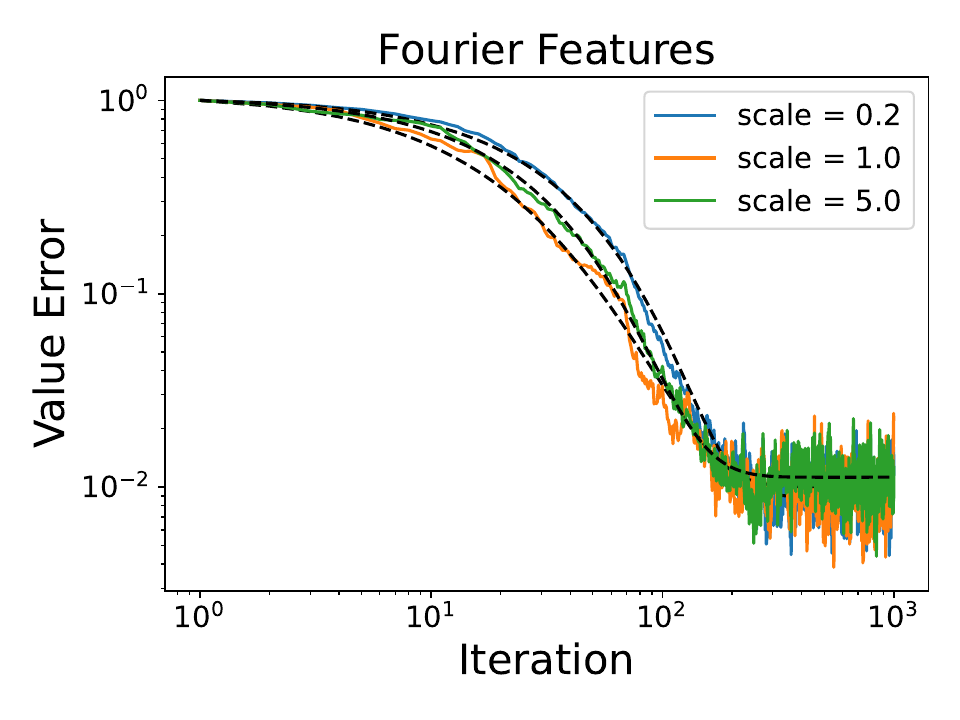}}   \caption{In this Figure, we simulate the same random walk policy on the 2D grid world but use other types of features beyond RBF place cells. Learning dynamics are still accurately described by our theory. (c) The polynomial basis over states with random powers is constructed as $\psi_i(s_1,s_2) = s_1^{c_{i,1}} s_2^{c_{i,2}}$ where $c_{i,1}, c_{i,2}$ are chosen at random from $\{0,1,..., k\}$ where $k$ is the degree. (f) Fourier features with spectral power density $\frac{1}{1+\sigma^2 (k_1^2 + k_2^2)}$, where $\sigma$ is the scale/bandwidth.   }
    \label{fig:enter-label}
\end{figure}

\section{Plateau Scaling in MountainCar-v0 Environment}\label{app:mountain_car}

We verified the results of the theory on the environment MountainCar-v0. First, we train a policy with tabular \(\epsilon\)-greedy Q-Learning (\(\epsilon = 0.1, \gamma = 0.99, \eta = 0.01\)) to learn policy \(\pi\). The position and velocity are discretized into 42 and 28 states, respectively. The learned policy \(\pi\) is not optimal but consistently reaches goal within 350 timesteps. Therefore, each episode is set to have a length of 350 timesteps. Next, we take \(\pi\) and evaluate it with TD learning.

Since MountainCar-v0 is a continuous environment, there is no closed solution to the ground truth of the value function. To estimate the ground truth value function, we ran TD learning with a small learning rate for 10M batches (\(\eta = 0.01, B = 1, \gamma = 0.99\)) to obtain \(V^\pi \approx \hat{V}_{10M}\).

\begin{figure}[h]
    \centering
    \subfigure[\(V^*\) estimated with tabular \(\epsilon\)-greedy Q-Learning]{\includegraphics[width=0.36\linewidth]{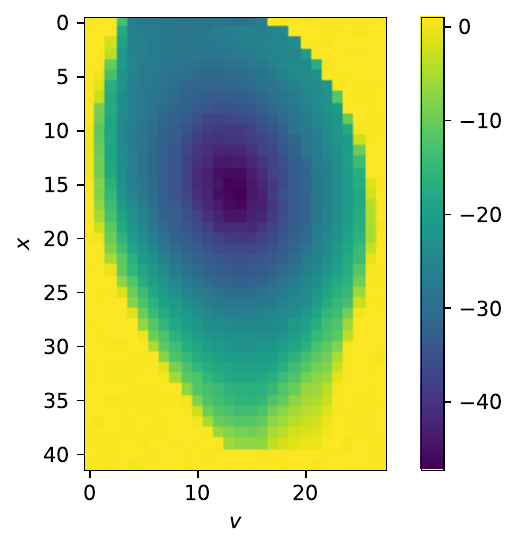}}
    \subfigure[\(V^\pi\) TD Converged Value Function]{\includegraphics[width=0.36\linewidth]{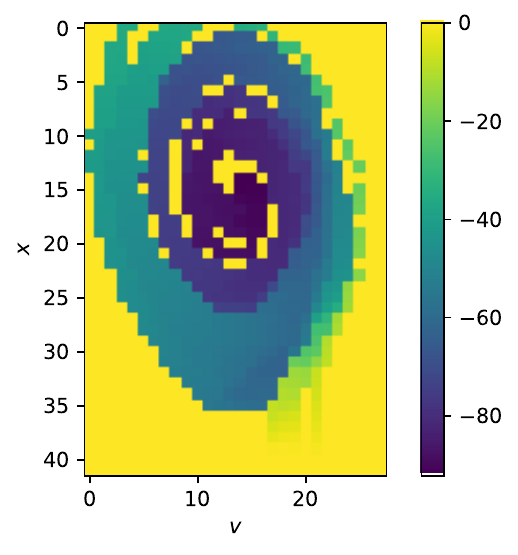}}
    \subfigure[Scaling of value error with learning rate]{\includegraphics[width=0.4\linewidth]{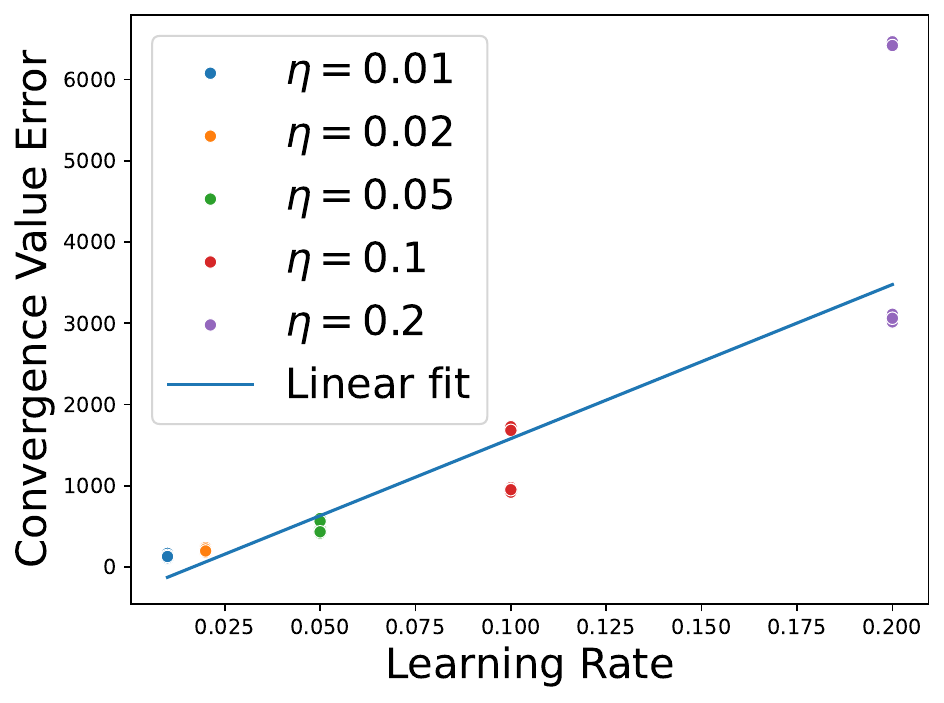}}
    \subfigure[Scaling of value error with batch size]{\includegraphics[width=0.4\linewidth]{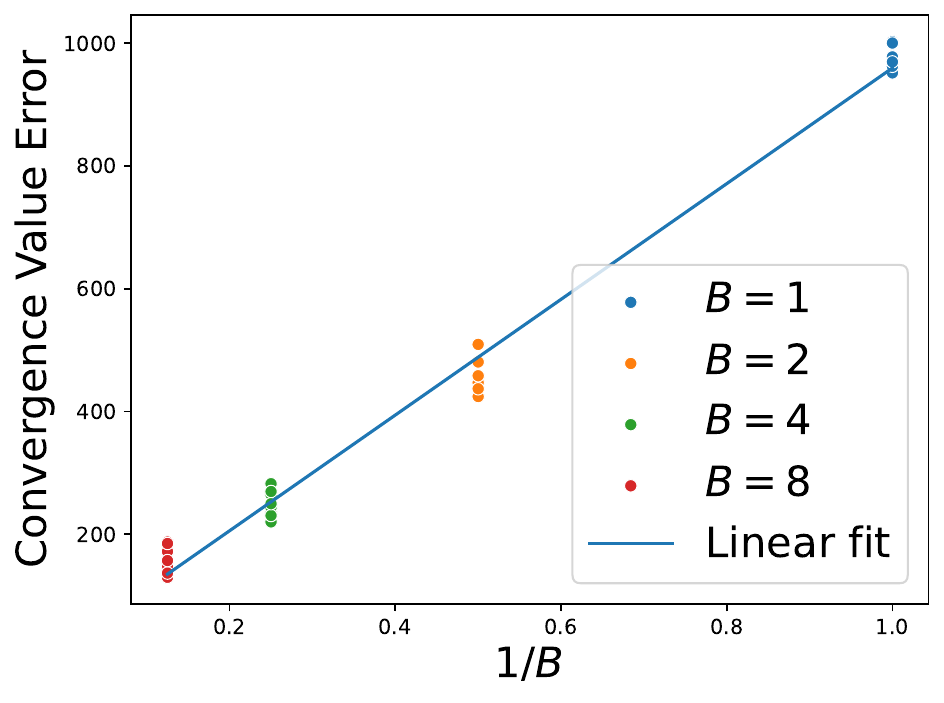}}
    \\
    \caption{Simulation in a MountainCar-v0 environment. (a) Value function learned by Tabular Q-Learning that approximates the value function of an optimal policy. (b) An example value function of a policy (\(V^\pi\)) learned by TD learning. Notice that the value function does not equate to that in (a) due to the policy \(\pi\) not reaching all states in the environment. (c-d) Linear scaling of convergence value error with the learning rate and the inverse of batch size. Target value function is the same across both experiments. Each dot represents a different seed. A total of 10 seeds were used. (c) Convergence value errors were computed by averaging the 100k batches before batch 10M. (d) Convergence value errors were computed by averaging the 100k batches before batch 1M.}
    \label{fig:mc-appendix}
\end{figure}

\section{Numerical methods and additional details}\label{app:numerical}

The code to generate the Figures is provided in the Supplementary Material as a Jupyter Notebook at the following Github repository \url{https://github.com/Pehlevan-Group/TD-RL-dynamics}. Here, we briefly highlight some of the parameter choices. 

For Figures \ref{fig:plateaus_annealing} and \ref{fig:reward_shaping} we use diagonally decoupled, but temporally correlated power law features with $\Sigma_{k\ell}(t,t') = \delta_{k\ell} \  k^{-1.2} \exp\left(- |t-t'|/\tau_k \right)$ with $\tau_k = \frac{10}{k+1}$ and $w^{R}_k = k^{-1.1}$ for $k \in [N]$ with $N = 300$. This type of feature structure is especially easy to evaluate the theoretical learning curves for. Unless otherwise stated, these figures used $\gamma = 0.9$ and batch size $B=10$.

For the 2D MDP grid world, we defined a discrete set of states on a $17 \times 17$ grid. The agent starts in the middle position and follows a random diffusion policy where each possible movement (up, down, left, right) is taken with equal probability. The features were generated as bell-shaped place cells (shown). We computed $\bm\Sigma(t,t')$ for the theory by sampling $5000$ random draws of length $T= 50$. The Gaussian learning curve is obtained with TD learning with $\bm\psi_G \sim \mathcal{N}(0,\bm\Sigma)$. 

Numerical experiments were performed on a NVIDIA SMX4-A100-80GB GPU using JAX to vectorize repetitive aspects of the experiments. With the exception of the MountainCar-v0 simulations, the numerical experiments (both preliminary experiments and those presented in the paper) took around $1$ hour of compute time. 


\end{document}